\documentclass[10pt,journal,compsoc]{IEEEtran}
\usepackage{times}
\usepackage{helvet}
\usepackage{courier}
\usepackage{amssymb}
\usepackage{amsmath}
\usepackage{graphicx}
\usepackage{subfigure}
\usepackage{amsthm}
\usepackage{multirow}

\newcommand{\beq}{\begin{equation}}
\newcommand{\eeq}{\end{equation}}
\newcommand{\bea}{\end{align}}
\newcommand{\eea}{\end{align}}
\newcommand{\beas}{\end{align*}}
\newcommand{\eeas}{\end{align*}}
\newcommand{\pt}{{\mathbf{P}_{t}}}
\newcommand{\pstat}{{\mathbf{P}^*}}
\newcommand{\at}{{\mathbf{A}_{t}}}
\newcommand{\atl}{{\mathbf{A}_{t-1}}}
\newcommand{\oml}{{{\Omega}_{t-1}}}
\newcommand{\gal}{{\Gamma_{t-1}}}

\newcommand{\cxxt}{\mathbf{C}_{t,\mathbf{XX}}}

\newcommand{\E}{\mathbb{E}}

\newcommand{\by}{\mathbf{y}}
\newcommand{\bx}{\mathbf{x}}

\newtheorem{theorem}{Theorem}[section]

\newtheorem{lemma}[theorem]{Lemma}
\newtheorem{remark}[theorem]{Remark}

\newtheorem{property}[theorem]{Property}

%
\ifCLASSOPTIONcompsoc
  \usepackage[nocompress]{cite}
\else
  \usepackage{cite}
\fi
\ifCLASSINFOpdf
\else
\fi
\hyphenation{op-tical net-works semi-conduc-tor}

\begin{document}

\title{Dynamic Structure Embedded Online Multiple-Output Regression for Stream Data}

\author{Changsheng~Li,
        Fan~Wei$^{\ast}$,
        Weishan~Dong$^{\ast}$,
        Qingshan~Liu,~\IEEEmembership{Senior Member,~IEEE,}
        Xiangfeng~Wang,
        and~Xin~Zhang
\IEEEcompsocitemizethanks{\IEEEcompsocthanksitem $^\ast$ Authors contributed equally.
\vspace{-1em}}

\IEEEcompsocitemizethanks{\IEEEcompsocthanksitem Corresponding author: Q. Liu.
\vspace{-1em}}
\IEEEcompsocitemizethanks{\IEEEcompsocthanksitem C. Li, W. Dong, and X. Zhang are with IBM Research-China, Beijing, China.
E-mail: \{lcsheng, dongweis, zxin\}$@$cn.ibm.com.
\vspace{-1em}}
\IEEEcompsocitemizethanks{\IEEEcompsocthanksitem F. Wei is with Department of Mathematics, Standford University, USA. E-mail: fanwei@math.stanford.edu.
\vspace{-1em}}
\IEEEcompsocitemizethanks{\IEEEcompsocthanksitem Q. Liu is with the B-DAT Laboratory at the school of Information and Control, Nanjing University of Information Science and Technology, China. E-mail:  qsliu@nuist.edu.cn.
\vspace{-1em}}
\IEEEcompsocitemizethanks{\IEEEcompsocthanksitem X. Wang is with Software Engineering Institute, East China Normal University, China. E-mail: xfwang@sei.ecnu.edu.cn.}

}


\IEEEtitleabstractindextext{%
\begin{abstract}
Online multiple-output regression is an important machine learning technique for modeling, predicting, and compressing multi-dimensional correlated data streams. In this paper, we propose a novel online multiple-output regression method, called MORES, for stream data.
MORES can \emph{dynamically} learn the structure of the coefficients change in each update step to facilitate the model's continuous refinement.
We observe that limited expressive ability of the regression model, especially in the preliminary stage of online update, often leads to the variables in the residual errors being dependent. In light of this point, MORES intends to \emph{dynamically} learn and leverage the structure of the residual errors to improve the prediction accuracy.
Moreover, we define three statistical variables to \emph{exactly} represent all the seen samples for \emph{incrementally} calculating prediction loss in each online update round, which can avoid loading all the training data into memory for updating model, and also effectively prevent drastic fluctuation of the model in the presence of noise. Furthermore, we introduce a forgetting factor to set different weights on samples so as to track the data streams' evolving characteristics quickly from the latest samples.
Experiments on one synthetic dataset and three real-world datasets validate the effectiveness of the proposed method. In addition, the update speed of MORES is at least 2000 samples processed per second on the three real-world datasets, more than 15 times faster than the state-of-the-art online learning algorithm.
\end{abstract}

\begin{IEEEkeywords}
Online multiple-output regression, dynamic relationship learning, forgetting factor, lossless compression
\end{IEEEkeywords}
}

\maketitle

\IEEEdisplaynontitleabstractindextext

%
\IEEEpeerreviewmaketitle
\section{Introduction}
Data streams arise in many scenarios, such as online transactions in the financial market, Internet traffic and so on~\cite{Lamport:hanjiawei}. Unlike traditional datasets in batch mode, a data stream should be viewed as a potentially infinite process collecting data with varying update rates, as well as continuously evolving over time.
In the context of data streams, although many research issues, such as classification \cite{domingos2000mining,aggarwal2006framework,muhlbaier2009learn}, clustering \cite{aggarwal2003framework,chen2007density,wang2013svstream}, active learning \cite{zhu2007active,chu2011unbiased,zliobaite2014active}, online feature selection \cite{wu2013online,jialei}, multi-task learning \cite{saha2011online,dekel2007online}, change point detection \cite{ho2010martingale,yamada2013change}, etc., have been extensively studied over the last decade, little attention is paid to multiple-output regression. However, multiple-output regression also has a great variety
of potential applications on data streams, including weather forecast, air quality prediction, etc.

In batch data processing, the purpose of multiple-output regression is to learn a mapping $\Phi$ from an input space $\mathbb{R}^d$ to an output space $\mathbb{R}^m$ on the whole training dataset. Based on the learned $\Phi$, we can simultaneously predict multiple output variables $\mathbf{y}\in \mathbb{R}^m$ to a given new input vector $\mathbf{x}\in \mathbb{R}^d$ by: $\mathbf{y}=\Phi(\mathbf{x})$.
According to the property of the learned mapping $\Phi$, multiple-output regression techniques can be divided into linear and nonlinear ones. Since linear methods usually have low complexities and reliable prediction performance, they have attracted more attention in the past.
Lots of batch multiple-output regression algorithms have been proposed
\cite{kim2009multivariate,sanchez2004svm,rothman2010sparse,sohn2012joint,rai2012simultaneously,kim2012tree,gonen2014kernelized,liu2014multivariate}.
However, in streaming environments, the data are not stored in a batch mode, but it arrives sequentially and continuously. If using these batch methods to re-train the models for streaming data, the computational complexity and memory complexity will increase sharply. Moreover, when the size of the arrived data becomes large, it is also impractical to load all the historical data into memory for model training.
Therefore, it is necessary to develop an online regression algorithm for simultaneously predicting multiple outputs.

So far, many online regression algorithms for predicting single output variable have been proposed \cite{ma2003accurate,crammer2006online,montana2008learning}. The representative method is online passive-aggressive (PA) algorithm \cite{crammer2006online}. PA is a margin based online learning algorithm, and it has an analytical solution to update model parameters as the new data sample(s) arrives.
Since there are often correlations among outputs, mining the correlation relationships can improve the prediction accuracy of the model \cite{rothman2010sparse}. However, PA only treats each of multiple outputs as an independent task, and thus can not capture any correlations among outputs.
Recently, McWilliams and Montana take advantage of partial least squares (PLS) to build a recursive regression model for online predicting multiple outputs, called iS-PLS \cite{mcwilliams2010sparse}. iS-PLS aims at finding a low-dimensional subspace to make the correlation between inputs and outputs maximized. iS-PLS focuses on the correlation between inputs and outputs, while it does not obviously consider the correlations among outputs.

In addition, many online multi-task learning algorithms can be potentially used for solving online multiple-output regression problem.  The goal of online multi-task learning is to jointly learn the related tasks (typically, classification or regression tasks) in an online fashion, so as to improve generalization across all tasks \cite{dekel2006online,eaton2013ella,ammar2014online}.
Dekel et al. \cite{dekel2006online} propose an online multi-task learning algorithm to capture the relationship between the tasks. They associate each individual prediction with its own individual loss, and then use a global loss function to evaluate the quality of the multiple predictions made on each round. Cavallanti et al. \cite{cavallanti2010linear} explicitly encode the task relationships in a matrix that is assumed to be known beforehand.
However, such an a priori assumption can often be restrictive in practice.
In order to conquer this limitation, Saha et al. \cite{saha2011online} propose to simultaneously learn the task weight vectors, as well their adaptive relationships from the data.
Ruvolo and Eaton \cite{eaton2013ella} aim to maintain a basis to transfers knowledge for learning each new task, and dynamically update the basis to improve previously learned models.
Furthermore, Ruvolo and Eaton \cite{ruvolo2014online} utilize K-SVD to learn multiple tasks, which has lower computational complexity over \cite{eaton2013ella}.
However, in \cite{dekel2006online}, \cite{cavallanti2010linear}, and \cite{saha2011online}, they only use the current new data to update the models on each round, but discard all the historical data, which loses a large amount of useful information and makes the models be easily affected by noise. In \cite{eaton2013ella} and \cite{ruvolo2014online}, they load all the data into the memory and visit a sample multiple times to update the model on each round, which is impractical in the streaming environment.
In addition, because of limited expressive ability of the learned models, especially in the preliminary stage of online learning, there are often correlations among the prediction errors. But the methods above do not consider how to utilize such relationships to further improve the performance of the models.

In this paper, we propose a novel \underline{M}ultiple-\underline{O}utput \underline{RE}gression method for \underline{S}tream data, named as MORES.
MORES works in an incremental fashion.
It aims at \emph{dynamically} learning the structures of both the regression coefficients change and the residual errors to continuously refine the model.
These structures can be learned from three statistical variables which can represent the necessity of the data without information loss.
Specifically, when a new training sample arrives, we transform the update of the regression coefficients into an optimization problem.
In the objective function, we highlight the following three aspects:

First, we take advantage of the matrix \emph{Mahalanobis norm} to measure the divergence between updated regression coefficient matrix and
current regression coefficient matrix, which can learn the structure of the coefficients change in each update step to facilitate model's refinement.
Second,
we utilize the {\emph{Mahalanobis}} distance instead of the Euclidean distance to measure the prediction error, so as to learn the structure of the residual errors and embed such structure to update the model.
Third, we define three statistical variables to store necessary information of all the seen data for exactly and incrementally measuring the prediction error, such that MORES can avoid loading all the data into memory and visiting data multiple times. This also effectively prevents the updated regression coefficient matrix gradually deviating from the true coefficient matrix due to noise and outliers in the data streams. Meanwhile, we introduce a forgetting factor to reweight samples for adapting to data streams' evolvement.
Extensive experiments are conducted on one synthetic dataset and three real-world datasets, and the experimental results demonstrate the effectiveness and efficiency of MORES.

\section{Online Multiple-Output Regression for Streaming Data}
Following the general setting of online learning \cite{crammer2006online}, we assume that the learner first observes an instance $\mathbf{x}_t\in\mathbb{R}^d$ on the $t$-th round, and it simultaneously predicts multiple outputs $\widehat{\mathbf{y}_t} \in \mathbb{R}^m$ based on the current model $\mathbf{P}_{t-1} \in\mathbb{R}^{m\times d}$. After that, the learner receives the true responses $\mathbf{y}_t \in \mathbb{R}^m$ for this instance. Finally, the learner updates the current model $\mathbf{P}_{t-1}$ based on the new data point $(\mathbf{x}_t, \mathbf{y}_t)$.
In this paper, our goal is to \emph{online} update $\mathbf{P}_{t-1}$, such that the updated $\mathbf{P}_{t}$ can predict the outputs for the incoming instance $\mathbf{x}_{t+1}$ as accurately as possible.
The prediction can be expressed in the following form:
\begin{equation}\label{sdform}
\mathbf{y}_{t+1}=\mathbf{P}_t\mathbf{x}_{t+1}+\epsilon_{t+1},
\end{equation}
where $\mathbf{P}_t=[\mathbf{p}_{t,1},\ldots,\mathbf{p}_{t,m}]^T$ denotes the learned regression coefficient matrix on the $t$-th round, and
$\mathbf{p}_{t,i}$ is the regression coefficient vector of the $i$-th output. $\epsilon_{t+1}=[\epsilon_{t+1,1},\ldots,\epsilon_{t+1,m}]^T$ is a vector
consisting of $m$ residual errors.

\subsection{Objective Function}\label{simpleformulation}
In order to obtain $\mathbf{P}_t$ on the $t$-th round, we first propose a simple formulation as:
\begin{equation}\label{somor}
\begin{array}{l}
\mathbf{P}_t=\mathop{\arg\min}\limits_{\mathbf{P}\in\mathbb{R}^{m\times d}}\|\mathbf{P}-\mathbf{P}_{t-1}\|_{\mathbf{F}}^2 \\
s.t. \ \ \|\mathbf{y}_t-\mathbf{P}\mathbf{x}_t\|_2^2\leq \xi, \qquad
\end{array}
\end{equation}
where $\xi$ is a positive parameter that controls the sensitivity to the prediction error.
$\|\cdot\|_{\mathbf{F}}$ denotes the matrix \emph{Frobenius norm}, and $\|\cdot\|_2$ denotes the $l_2$ \emph{norm} of a vector.

The core idea of objective function (\ref{somor}) is as follows: On one hand, it intends to minimize the distance between $\mathbf{P}_t$ and $\mathbf{P}_{t-1}$ to make $\mathbf{P}_t$ close to $\mathbf{P}_{t-1}$ as much as possible, which can retain the information learned on previous rounds.
On the other hand, it requires $\mathbf{P}_t$ to meet the condition: The total prediction error on the current data point $(\mathbf{x}_t,\mathbf{y}_t)$ is less than or equal to $\xi$.

Following \cite{crammer2006online}, the optimization problem defined by (\ref{somor}) can be easily solved by the Lagrange multiplier method.

Although (\ref{somor}) has some merits for online regression prediction of streaming data, it still has the following limitations:

(i) Because of
$\|\mathbf{P}-\mathbf{P}_{t-1}\|_{\mathbf{F}}^2=tr((\mathbf{P}-\mathbf{P}_{t-1})(\mathbf{P}-\mathbf{P}_{t-1})^T)=\sum_{i=1}^m((\mathbf{p}_i-\mathbf{p}_{t-1,i})^T(\mathbf{p}_i-\mathbf{p}_{t-1,i}))$, we can see that the objective function in (\ref{somor}) treats the update of regression
coefficients as $m$ independent tasks. However, in streaming environments, outputs are often dependent, i.e., there are some positive or negative correlations among outputs. For example, correlated outputs could be the concentration of CO$_2$ and fine particles PM$_{2.5}$ in a air quality forecast. Thus updating regression coefficient vectors for all the outputs should not be regarded as completely independent tasks.

(ii) In order to acquire the updater $\mathbf{P}_t$, (\ref{somor}) imposes a constraint on $\mathbf{P}$, i.e.,
$\|\mathbf{y}_t-\mathbf{P}\mathbf{x}_t\|_2^2\leq \xi$. This constraint just refers to the current data point $(\mathbf{x}_t,\mathbf{y}_t)$ but
ignores the historical data points $\mathcal{S}_{t-1}=\{(\mathbf{x}_i,\mathbf{y}_i)\}_{i=1}^{t-1}$. This may lead to the updated coefficient matrix gradually deviating from the true coefficient matrix because of noise and outliers in many practical streaming data applications.

(iii) The constraint of (\ref{somor}) takes advantage of the $l_2$ norm to measure the total prediction error. As we know, the $l_2$ {norm} of a vector assumes that all the variables in the vector are
independent. However, due to limited expressive power of $\mathbf{P}_t$, especially when the round $t$ is small, there are often
correlations between the residual errors. Therefore, the $l_2$ {norm} is often the suboptimal choice for measuring the total prediction
error.

In light of above three limitations, we propose to minimize the following objective function, in order to update the model on round $t$:
\begin{align}\label{obj}
(\mathbf{P}_t,\Omega_t,\Gamma_t)=&\arg\min_{\mathbf{P},\Omega,\Gamma} \ \ J(\mathbf{P},\Omega,\Gamma; \mathbf{P}_{t-1}, \Omega_{t-1}, \mathcal{S}_t) \nonumber\\
=&\arg\min_{\mathbf{P},\Omega,\Gamma}\|\mathbf{P}-\mathbf{P}_{t-1}\|_{\Omega}^2+\alpha\ell(\mathbf{P},\Gamma;\mathcal{S}_t)\nonumber\\
&+\beta\Delta_\phi(\Omega,\Omega_{t-1})+\rho\Delta_\phi(\Omega,\mathbf{I}) +\eta\Delta_\phi(\Gamma,\mathbf{I}) \nonumber\\
s.t. \ \ \ \ \Omega &\succeq 0, \ \ \Gamma\succeq 0, \qquad \qquad \qquad \qquad\qquad \qquad
\end{align}
where $\alpha\geq0$, $\beta\geq0$, $\rho\geq0$, and $\eta\geq0$ are four trade-off parameters. $\|\cdot\|_{\Omega}$ denotes the
matrix \emph{Mahalanobis norm}. $\ell(\mathbf{P}, \Gamma;\mathcal{S}_t)$ is the total prediction error on the data points $\mathcal{S}_t=\{(\mathbf{x}_i,\mathbf{y}_i)\}_{i=1}^{t}$.
$\Delta_\phi(\cdot,\cdot)$ denotes the Bregman divergence \cite{kivinen1997exponentiated} that measures the distance between two matrices. $\Omega\succeq 0$ and $\Gamma\succeq 0$ represent that they are positive semi-definite.

In the objective function (\ref{obj}), the first term aims to learn the structure $\Omega_t$ of the coefficients change from current matrix $\mathbf{P}_{t-1}$ to updated matrix $\mathbf{P}_t$, and leverage $\Omega_t$ to measure the divergence between $\mathbf{P}_t$ and $\mathbf{P}_{t-1}$ on round $t$.
The second term intends to mine the underlying structure $\Gamma_t$ existing in the residual errors, and take advantage of $\Gamma_t$ to measure the total prediction error on both the current data and the historical data in the $t$-th update. Here we define three statistical variables to store necessary information of data for lowering memory complexity, and introduce a forgetting factor to adapt to the evolving data streams (See (\ref{loss_fuction}), (\ref{new_loss}) for details).
The third term aims at keeping $\Omega_t$ updated in a conservative strategy to reduce the influence of noise. The last two terms are regularization terms to prevent $\Omega_t$ and $\Gamma_t$ deviating from the identity matrix $\mathbf{I}$ too much. In the meantime, they also help ensuring that $\Omega_t$ and $\Gamma_t$ are away from 0 otherwise it is meaningless to minimize $J$. We will see in Section \ref{optimizationpro} that the eigenvalues of $\Omega_t$ and $\Gamma_t$ are always bounded between 0 and 1 if initialized with eigenvalues between 0 and 1, and the eigenvalues are monotone increasing after each step of iteration, making the iteration procedure well-controlled.
Next, we will respectively explain the first three terms in detail.

\noindent\textbf{The first term}: In order to capture the structure of the regression coefficient change in each update step, we introduce \emph{Mahalanobis norm} of the matrix $(\mathbf{P}-\mathbf{P}_{t-1})$ to measure the divergence between $\mathbf{P}$ and $\mathbf{P}_{t-1}$. The \emph{Mahalanobis norm} is expressed as:
\begin{align}\label{mahanorm}
\|\mathbf{P}-\mathbf{P}_{t-1}\|_{\Omega}=\sqrt{tr((\mathbf{P}-\mathbf{P}_{t-1})^T\Omega(\mathbf{P}-\mathbf{P}_{t-1}))} \nonumber\\
=\sqrt{\sum\nolimits_{i=1}^d(\mathbf{P}(i)-\mathbf{P}_{t-1}(i))^T\Omega(\mathbf{P}(i)-\mathbf{P}_{t-1}(i))},
\end{align}
where $\mathbf{P}(i)$ denotes the $i$-th column of $\mathbf{P}$. When $\Omega$ is set to the identity matrix, the
\emph{Mahalanobis norm} of the matrix is reduced to the \emph{Frobenius norm}.
In (\ref{mahanorm}), the term $(\mathbf{P}(i)-\mathbf{P}_{t-1}(i))^T\Omega(\mathbf{P}(i)-\mathbf{P}_{t-1}(i))$ is actually the \emph{Mahalanobis} distance between $\mathbf{P}(i)$ and $\mathbf{P}_{t-1}(i)$, where $\Omega$ encodes the correlations between the variables of the $i$-th column of the regression coefficient matrix on round $t$ \cite{zhang2012convex}.
Therefore, $\|\mathbf{P}-\mathbf{P}_{t-1}\|_{\Omega}^2$ can be viewed as a summation of $d$ {Mahalanobis} distances, each of which measures the distance between the corresponding column vectors of $\mathbf{P}$ and $\mathbf{P}_{t-1}$.

\noindent\textbf{The second term}: The loss function $\ell(\mathbf{P}, \Gamma;\mathcal{S})$ measures the prediction error on $\mathcal{S}$, which is defined as:
\begin{align}\label{loss_fuction}
\ell(\mathbf{P},\Gamma;\mathcal{S})=\sum\nolimits_{i=1}^t\mu^{t-i}(\mathbf{y}_i-\mathbf{P}\mathbf{x}_i)^T\Gamma(\mathbf{y}_i-\mathbf{P}\mathbf{x}_i),
\end{align}
where $0\leq\mu\leq1$ is a forgetting factor\footnote{When $\mu=0$, and $t-i=0$, we define $\mu^{t-i}=1$ to ensure consistency.}. When $\mu=0$, the prediction loss is only measured on the current sample without referring to any historical samples.
When $\mu=1$, all the samples have equal weight to contribute to the prediction loss.
When $0<\mu<1$, all the samples have different contributions to the prediction loss.
As a matter of fact, the function of $\mu$ is similar to a new form of time window on samples.
The farther the historical sample is from the current sample in the time domain, the lower its importance is, which will fit in the evolving characteristic of data streams well.
The matrix $\Gamma_t$ embeds the correlation relationships among the residual errors on $t$-th update.
The term $(\mathbf{y}_i-\mathbf{P}\mathbf{x}_i)^T\Gamma(\mathbf{y}_i-\mathbf{P}\mathbf{x}_i)$ measures the {Mahalnobis} distance between the true value $\mathbf{y}_i$ and the predicted value $\mathbf{P}\mathbf{x}_i$, which can remove the influence of the residual errors' correlations on distance calculation \cite{weinberger2005distance}.

In streaming environments, it is impractical to load all the historical data into memory or scan a sample multiple times. An effective way to handle this issue is to define some statistical variables to store necessary information of the samples. In this paper, we introduce three statistical variables to realize lossless compression of the data. To do this, we will make use of the following property and lemma.

\begin{property}
Given a set of arbitrary sequence vectors, $\mathbf{x}_1,\!\ldots\!,\mathbf{x}_t$, and a constant $\mu$, if $\mathbf{C}_t\!=\!\sum\nolimits_{i=1}^t\!\mu^{t\!-\!i}\mathbf{x}_i\mathbf{x}_i^T$, then $\mathbf{C}_t\!=\!\mu \mathbf{C}_{t\!-\!1}\!+\!\mathbf{x}_t\mathbf{x}_t^T$, where $t$ is the timestamp.
\end{property}

\begin{lemma}
The loss function (\ref{loss_fuction}) can be expressed as
\begin{align}\label{new_loss}
\ell=tr(\Gamma\mathbf{C}_{t,\mathbf{YY}})+tr(\mathbf{P}^T\Gamma\mathbf{P}\mathbf{C}_{t,\mathbf{XX}})-2tr(\Gamma\mathbf{P}\mathbf{C}_{t,\mathbf{XY}}).
\end{align}
where $\mathbf{C}_{t,\mathbf{YY}}$, $\mathbf{C}_{t,\mathbf{XY}}$, and $\mathbf{C}_{t,\mathbf{XX}}$ are three variables, which are respectively defined as:
\begin{align}
\label{sv1}
\mathbf{C}_{t,\mathbf{YY}}&=\mu\mathbf{C}_{t-1,\mathbf{YY}}+\mathbf{y}_t\mathbf{y}_t^T. \\
\label{sv2}
\mathbf{C}_{t,\mathbf{XY}}&=\mu\mathbf{C}_{t-1,\mathbf{XY}}+\mathbf{x}_t\mathbf{y}_t^T.\\
\label{sv3}
\mathbf{C}_{t,\mathbf{XX}}&=\mu\mathbf{C}_{t-1,\mathbf{XX}}+\mathbf{x}_t\mathbf{x}_t^T.
\end{align}
\end{lemma}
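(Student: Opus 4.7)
The plan is to verify the closed-form expression for $\ell$ by a direct algebraic expansion of the quadratic, followed by an application of the cyclic trace identity and a swap of the summation with the trace. The recursive definitions in (\ref{sv1})--(\ref{sv3}) will then follow immediately from Property~1.

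First I would expand the integrand pointwise. For each index $i$, writing out $(\mathbf{y}_i-\mathbf{P}\mathbf{x}_i)^T\Gamma(\mathbf{y}_i-\mathbf{P}\mathbf{x}_i)$ yields four scalar pieces: $\mathbf{y}_i^T\Gamma\mathbf{y}_i$, two cross terms $-\mathbf{y}_i^T\Gamma\mathbf{P}\mathbf{x}_i$ and $-\mathbf{x}_i^T\mathbf{P}^T\Gamma\mathbf{y}_i$, and $\mathbf{x}_i^T\mathbf{P}^T\Gamma\mathbf{P}\mathbf{x}_i$. Because $\Gamma$ is symmetric (positive semi-definite), the two cross terms are equal, producing the factor of $2$ that appears in the last term of (\ref{new_loss}).

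Next I would convert each scalar into a trace of a rank-one matrix via the identity $\mathbf{u}^T M \mathbf{v} = \mathrm{tr}(M\mathbf{v}\mathbf{u}^T)$ together with the cyclic property $\mathrm{tr}(ABC)=\mathrm{tr}(CAB)$. Concretely, $\mathbf{y}_i^T\Gamma\mathbf{y}_i=\mathrm{tr}(\Gamma\mathbf{y}_i\mathbf{y}_i^T)$, the cross term becomes $\mathrm{tr}(\Gamma\mathbf{P}\mathbf{x}_i\mathbf{y}_i^T)$, and the quadratic $\mathbf{x}_i^T\mathbf{P}^T\Gamma\mathbf{P}\mathbf{x}_i=\mathrm{tr}(\mathbf{P}^T\Gamma\mathbf{P}\mathbf{x}_i\mathbf{x}_i^T)$. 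Multiplying each by $\mu^{t-i}$, summing over $i=1,\dots,t$, and pulling the constant matrices $\Gamma$ and $\mathbf{P}^T\Gamma\mathbf{P}$ outside the trace (by linearity) yields exactly the three terms in (\ref{new_loss}), with the inner sums $\sum_{i=1}^t \mu^{t-i}\mathbf{y}_i\mathbf{y}_i^T$, $\sum_{i=1}^t \mu^{t-i}\mathbf{x}_i\mathbf{y}_i^T$, and $\sum_{i=1}^t \mu^{t-i}\mathbf{x}_i\mathbf{x}_i^T$ playing the roles of $\mathbf{C}_{t,\mathbf{YY}}$, $\mathbf{C}_{t,\mathbf{XY}}$, and $\mathbf{C}_{t,\mathbf{XX}}$, respectively.

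Finally, to justify the recursive formulas (\ref{sv1})--(\ref{sv3}), I would simply instantiate Property~1 with $\mathbf{x}_i\mapsto \mathbf{y}_i$ for (\ref{sv1}), with the outer product $\mathbf{x}_i\mathbf{y}_i^T$ in place of $\mathbf{x}_i\mathbf{x}_i^T$ for (\ref{sv2}) (the property's proof is identical, since it only uses linearity of the sum and the single-term geometric identity $\mu^{t-i}=\mu\cdot\mu^{(t-1)-i}$ for $i\le t-1$ together with $\mu^0=1$ for $i=t$), and directly for (\ref{sv3}). No genuinely hard step is involved; the only point requiring mild care is the convention $\mu^{0}=1$ even when $\mu=0$, noted in the footnote to (\ref{loss_fuction}), which is needed so that the $i=t$ summand contributes and the recursion remains valid at $t=1$. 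Hence the main obstacle is purely bookkeeping: keeping the trace manipulations and the indices of the summations aligned so that every term lands in the correct slot of (\ref{new_loss}).
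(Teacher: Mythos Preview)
Your proposal is correct and follows essentially the same approach as the paper's own proof: expand the quadratic, rewrite each scalar as a trace via the cyclic identity, pull the sums inside the trace to identify the three $\mathbf{C}$-matrices, and then invoke Property~1 for the recursions. Your write-up is in fact slightly more explicit than the paper's (e.g., noting why the two cross terms coincide and flagging the $\mu^0=1$ convention), but the underlying argument is the same.
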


\begin{proof}
Based on (\ref{loss_fuction}), we have
\begin{align} \label{proof1}
\ell (&\mathbf{P}, \Gamma; \mathcal{S})=\sum\nolimits_{i=1}^t\mu^{t-i}(\mathbf{y}_i-\mathbf{P}\mathbf{x}_i)^T\Gamma(\mathbf{y}_i-\mathbf{P}\mathbf{x}_i)\nonumber\\
=&\sum\nolimits_{i=1}^t\mu^{t-i}(tr(\mathbf{y}_i^T\Gamma\mathbf{y}_i)+tr(\mathbf{x}^T_i\mathbf{P}^T\Gamma\mathbf{P}\mathbf{x}_i))\nonumber\\
&-2\sum\nolimits_{i=1}^t\mu^{t-i}tr(\mathbf{y}_i^T\Gamma\mathbf{P}\mathbf{x}_i)\nonumber\\
=&\sum\nolimits_{i=1}^t\mu^{t-i}(tr(\Gamma\mathbf{y}_i\mathbf{y}_i^T)+tr(\mathbf{P}^T\Gamma\mathbf{P}\mathbf{x}_i\mathbf{x}^T_i))\nonumber\\
&-2\sum\nolimits_{i=1}^t\mu^{t-i}tr(\Gamma\mathbf{P}\mathbf{x}_i\mathbf{y}_i^T)\nonumber\\
=&tr(\Gamma \sum\nolimits_{i=1}^t\mu^{t-i}\mathbf{y}_i\mathbf{y}_i^T)+tr(\mathbf{P}^T\Gamma\mathbf{P}\sum\nolimits_{i=1}^t\mu^{t-i}\mathbf{x}_i\mathbf{x}^T_i)\nonumber\\
&-2tr(\Gamma\mathbf{P}\sum\nolimits_{i=1}^t\mu^{t-i}\mathbf{x}_i\mathbf{y}_i^T).
\end{align}

Defining the matrix variables $\mathbf{C}_{t,\mathbf{YY}}, \mathbf{C}_{t,\mathbf{XY}}$, and $\mathbf{C}_{t,\mathbf{XX}}$ as:
\begin{align}\label{ctyy}
\mathbf{C}_{t,\mathbf{YY}}=\sum\nolimits_{i=1}^t\mu^{t-i}\mathbf{y}_i\mathbf{y}_i^T \\
\label{ctxy}
\mathbf{C}_{t,\mathbf{XY}}=\sum\nolimits_{i=1}^t\mu^{t-i}\mathbf{x}_i\mathbf{y}_i^T \\
\label{ctxx}
\mathbf{C}_{t,\mathbf{XX}}=\sum\nolimits_{i=1}^t\mu^{t-i}\mathbf{x}_i\mathbf{x}^T_i
\end{align}

Substituting $\mathbf{C}_{t,\mathbf{YY}}, \mathbf{C}_{t,\mathbf{XY}},$ and $\mathbf{C}_{t,\mathbf{XX}}$ into (\ref{proof1}), the loss function (\ref{loss_fuction}) becomes (\ref{new_loss}). Based on the {Property 1}, (\ref{ctyy}), (\ref{ctxy}), and (\ref{ctxx}) can be expressed by (\ref{sv1}), (\ref{sv2}), and (\ref{sv3}), respectively.
\end{proof}
%


When a new data point $(\mathbf{x}_{t+1},\mathbf{y}_{t+1})$ arrives on round $t\!+\!1$, we first update the statistical variables
$\mathbf{C}_{t+1,\mathbf{YY}}$, $\mathbf{C}_{t+1,\mathbf{XY}}$, $\mathbf{C}_{t+1,\mathbf{XX}}$ based on (\ref{sv1}), (\ref{sv2}), and (\ref{sv3}) respectively, whose memory complexity is a constant: $O(m^2+md+d^2)$.
After that, the prediction loss $\ell$ can be measured by (\ref{new_loss}), so it is no longer necessary to load all the training samples into memory or visit a training sample multiple times.

In summary, the loss function (\ref{loss_fuction}) has the following merits: First, it can dynamically learn the structure of the residual errors as the samples continuously arrive, and embed the structure information to effectively measure the true distance between the predicted value and the ground truth.
Second, the loss is measured based on all the seen samples not just the current sample, which can cut down on the influence of noise on model's update.
Third, on each round, instead of loading all the samples into memory, the loss can be measured just relying on three defined statistical variables without information loss, as expressed in (\ref{new_loss}).
Furthermore, by introducing the factor $\mu$ to weight the samples, MORES can fit in streaming data's evolvement well.

\noindent\textbf{{The third term}}: In order to restrain $\phi$ fluctuating drastically on each round, we hope that the divergence $\Delta_\phi(\Omega,\Omega_{t-1})$ is as small as possible. $\Delta_\phi(\Omega,\Omega_{t-1})$ is defined as:
\begin{align}
\Delta_\phi(\Omega,\Omega_{t-1})=\phi({\Omega})-\phi({\Omega_{t-1}})-tr(g({\Omega_{t-1}})(\Omega-\Omega_{t-1})),\nonumber
\end{align}
where $\phi$ is a real-valued strictly convex differentiable function on the parameter domain $\mathbb{R}^{m\times m}$.
$g({\Omega_{t-1}})=\nabla_\Omega\phi({\Omega})|_{\Omega_{t-1}}$. In this paper, we employ the \emph{LogDet} matrix divergence metric to measure the distance between two matrices, because of its good properties stated in \cite{tsuda2005matrix}. \emph{LogDet} can be expressed as
\begin{align}
\Delta_\phi(\Omega,\Omega_{t-1})&={log} \frac{{det}(\Omega_{t-1})}{{det}(\Omega)}+tr(\Omega_{t-1}^{-1}\Omega)-m,\nonumber
\end{align}
where $\phi({\Omega})=-{{log}} ({det}(\Omega))$, and $det(\cdot)$ denotes the determinant of a matrix.
\subsection{Optimization Procedure}\label{optimizationpro}
The objective function (\ref{obj}) is not convex with respect to all variables, but it is convex with each variable when others are fixed.
We adopt an alternating optimization strategy to solve (\ref{obj}), which can find local minima.

\textbf{Optimizing} $\mathbf{P}_t$ \textbf{given} $\Omega_{t-1}$ \textbf{and} $\Gamma_{t-1}$: When $\Omega_{t-1}$ {and} $\Gamma_{t-1}$ are fixed, (\ref{obj}) is then unconstrained and convex.
$\mathbf{P}_t$ can be obtained by minimizing the following objective function:
\begin{align}\label{objfun1}
\mathbf{P}_t=&\arg\min_{\mathbf{P}} J_1(\mathbf{P}; \mathbf{P}_{t-1}, \Omega_{t-1}, \Gamma_{t-1}, \mathcal{S}_t) \nonumber\\
=&\arg\min_{\mathbf{P}}\|\mathbf{P}-\mathbf{P}_{t-1}\|_{\Omega_{t-1}}^2+\alpha\ell(\mathbf{P},\Gamma_{t-1};\mathcal{S}_t).\nonumber
\end{align}

The necessary condition for the optimality is:
\beq
\frac{\partial J_1(\mathbf{\footnotesize{P}}; \mathbf{P}_{t-1}, \Omega_{t-1}, \Gamma_{t-1}, \mathcal{S}_t)}{\partial\mathbf{P}}=0. \label{aaaa}
\eeq
This implies
 \beq
 \Omega_{t-1}\mathbf{P}+\alpha\Gamma_{t-1}\mathbf{P} \mathbf{C}_{t,\mathbf{XX}} =\Omega_{t-1} {\mathbf{P}}_{t-1}+\alpha \Gamma_{t-1}\mathbf{C}_{t,\mathbf{XY}}^T. \label{lyap}
\eeq
To solve (\ref{lyap}), we will use the property that $\oml, \gal$ are always positive definite as we will show later, and thus we can write (\ref{lyap}) as
\[ \Gamma_{t-1}^{-1} \Omega_{t-1} \mathbf{P} + \alpha \mathbf{P} \cxxt  = \Gamma_{t-1}^{-1} \Omega_{t-1} \mathbf{P}_{t-1}  + \alpha \mathbf{C}_{t,\mathbf{XY}}^T = \mathbf{Z}. \]

Since $\cxxt$ is positive semi-definite and symmetric, we can do eigenvalue decomposition with all non-negative eigenvalues, obtaining $\cxxt = \mathbf{V} \Theta \mathbf{V}^{-1}$.
For $\Gamma_{t-1}^{-1} \Omega_{t-1}$, we can do Schur decomposition, obtaining $\Gamma_{t-1}^{-1} \Omega_{t-1} =  \mathbf{U L U}^{-1}$ where $\mathbf{L}$ is an upper-triangular matrix. Then (\ref{lyap}) can be written as
\[ \mathbf{UL U}^{-1} \mathbf{P} + \mathbf{P} \alpha \mathbf{V} \Theta \mathbf{V}^{-1} = \mathbf{Z}. \]
Therefore
\[ \mathbf{L U}^{-1} \mathbf{P} \mathbf{V} + \mathbf{U}^{-1} \mathbf{P} \mathbf{V} (\alpha \Theta) = \mathbf{U}^{-1} \mathbf{Z V}. \]

Let $\mathbf{U}^{-1} \mathbf{P} \mathbf{V}  = \tilde{\mathbf{P}}$, we have an equation in a simpler form, i.e.,
 \[ \mathbf{L} \tilde{\mathbf{P}} +  \tilde{\mathbf{P}} (\alpha \Theta) = \mathbf{U}^{-1} \mathbf{Z V}. \]
 By writing out the the product of matrices on the left-hand side entry-wise, we can easily obtaining the values of $\tilde{\mathbf{P}}$ starting from the last row. Specifically,
 if $\tilde{\mathbf{P}}$ is $m \times d$, then all the elements of the last row of $\mathbf{P}$ has the explicit form
 \begin{align}\label{phat1}
 \tilde{\mathbf{P}}_{m, i} = \frac{(\mathbf{U}^{-1} \mathbf{Z V})_{m,i}}{\mathbf{L}_{m,m} + \alpha \Theta_{i,i}}, i=1,2,\ldots, d.
 \end{align}
 And we can obtain all the elements of the second last row of $\mathbf{P}$ as
 \begin{align}\label{phat2}
 \tilde{\mathbf{P}}_{m\!-\!1, i} = \frac{(\mathbf{U}^{-1} \mathbf{Z V})_{m-1,i}}{\mathbf{L}_{m-1,m-1} \!+\! \alpha \Theta_{i,i}} \!-\! \frac{\mathbf{L}_{m-1, m}}{\mathbf{L}_{m, m} \!+\! \Theta_{i,i}}\tilde{\mathbf{P}}_{m, i}, i=1,2,\ldots d.
 \end{align}
 Similarly, we can easily obtain the values of the $j$-th row of $\tilde{\mathbf{P}}$ from later rows, i.e., obtaining $\tilde{\mathbf{P}}_{j,i}$ from $\tilde{\mathbf{P}}_{k,l}$ with $k>j$. Therefore by this back propagation idea, we can efficiently obtain the matrix $\tilde{\mathbf{P}}$.
 And then we can obtain $\mathbf{P}_t$  by
 \begin{align}\label{p}
 \mathbf{P}_t = \mathbf{U} \tilde{\mathbf{P}} \mathbf{V}^{-1}.
 \end{align}

 \begin{remark}
 Actually, the procedure above can be much more efficient. This is based on the fact that since $\gal, \oml$ are both positive definite Hermitian matrices, then $\mathbf{\Gamma}^{-1}_{t-1} \Omega_{t-1}$ can have eigen-decomposition. We will prove this fact in Section \ref{theory}. This means that in the Schur decomposition $\Gamma_{t-1}^{-1} \Omega_{t-1} =  \mathbf{U L U}^{-1}$ above, the upper-triangular matrix $\mathbf{L}$ is actually a diagonal matrix. Therefore ${\mathbf{L}}_{i,j} = 0$ for all $i \neq j$.
 Thus in (\ref{phat2}), we can easily have a simpler formula
\begin{align}\label{phat3}
 \tilde{\mathbf{P}}_{j, i}= \frac{{({\mathbf{U}}^{-1} \mathbf{Z V})}_{j,i}}{\mathbf{L}_{j,j} + \alpha \Theta_{i,i}}.
 \end{align}

 \end{remark}

{{\textbf{Optimizing}} $\Omega_t$} \textbf{given} $\mathbf{P}_t$ {\textbf{and}} $\Gamma_{t-1}$: When $\mathbf{P}_t$ {and} $\Gamma_{t-1}$ are fixed, solving $\Omega_t$ becomes a convex optimization problem as:
\begin{align}\label{objfun2}
\Omega_t=&\arg\min_{\Omega\succeq 0} J_2(\Omega; \mathbf{P}_t, \mathbf{P}_{t-1}, \Omega_{t-1}) \nonumber\\
=&\arg\min_{\Omega\succeq 0}\|\mathbf{P}_t-\mathbf{P}_{t-1}\|_{\Omega}^2 +\beta\Delta_\phi(\Omega,\Omega_{t-1}) + \rho \Delta_\phi(\Omega, \mathbf{I}).\nonumber
\end{align}

We want to solve
\beq \frac{\partial J_2(\Omega; \mathbf{P}_t, \mathbf{P}_{t-1}, \Omega_{t-1})}{\partial (\Omega)}=0. \label{po}
\eeq
Notice that $\Omega$ does not have all the entries being independent. Instead, it is a symmetric matrix $\Omega=\Omega^T$. We know for a symmetric matrix,
\[ \dfrac{\partial  tr (\mathbf{Q} \Omega \mathbf{Q}^T)}{\partial \Omega} = 2 \mathbf{QQ}^T - diag(\mathbf{QQ}^T), \]
where $diag(\mathbf{M})$ means a diagonal matrix consisting of only the diagonal entries of $\mathbf{M}$.

We define a linear map $s$ applied to matrix $\mathbf{M} \in \mathbb{R}^{n \times n}$ defined by:
\[s(\mathbf{M}) = 2\mathbf{M} - diag(\mathbf{M}). \]
Then it is easy to see that $ker(s) = 0$, where $ker(s)$ represents the kernel of the specified function, i.e., the subspace of the domain which is mapped to 0 by the linear map $s$. Thus $s$ is a surjective and injective linear map from $\mathbb{R}^{n \times n}$ to $\mathbb{R}^{n \times n}$.

To solve (\ref{po}), we have that
\[ s(\Omega^{-1}) = \frac{\beta( s(\Omega_{t-1}^{-1}) \!+\! \rho  s(\mathbf{I}) \!+\!  s((\mathbf{P}_t\!-\!\mathbf{P}_{t-1})(\mathbf{P}_t\!-\!\mathbf{P}_{t-1})^T))}{\beta + \rho}. \]
Since $s$ is a linear map that is both injective and surjective, we have that $s(A) = s(B)$ if and only if $A = B$. Hence, we have
\begin{align}\label{omegatt}
\Omega_t=\left(\frac{1}{ \beta\!+\! \rho} (\beta \Omega_{t-1}^{-1} \!+\! \rho \mathbf{I} \!+\! (\mathbf{P}_t \!-\!\mathbf{P}_{t-1})(\mathbf{P}_t \!-\!\mathbf{P}_{t-1})^T)\right)^{-1}.
\end{align}
From the right hand side, it is easy to see that $\Omega_t$ is symmetric, thus having all eigenvalues being real.

We want to prove that $\Omega_t$ is positive semi-definite and has eigenvalues no more than 1.
For simplicity, we use the matrix $\mathbf{M}$ to represent $(\mathbf{P}_t-\mathbf{P}_{t-1})(\mathbf{P}_t-\mathbf{P}_{t-1})^T$.

It is quite straightforward that $\Omega_t$ is positive semi-definite.
This is because $\Omega_{t-1}$ is positive semi-definite, then $\Omega_{t-1}^{-1}$ is positive semi-definite.  Together with the fact that $\mathbf{I}$ and $\mathbf{M}$ have positive eigenvalues, we can conclude that $\Omega_t$ is positive definite.

We now show that it has all eigenvalues being no more than 1. We prove it by induction. Suppose that we initialize with an $\Omega_0$ having all eigenvalues no more than 1, and suppose that the conclusion also holds for $\Omega_{t-1}$. Notice that $\Omega_{t-1}^{-1}$ has eigenvalues all no less than 1. $\mathbf{M}$ has eigenvalues being non-negative, and $\mathbf{I}$ has all eigenvalues being 1. Hence $\Omega_t^{-1}$ has eigenvalues no less than $\frac{1}{\beta + \rho} (\beta + \rho + 0) = 1$. Thus $\Omega_t$ has eigenvalues with values no more than 1. We have proved that the $\Omega_t$ are positive semi-definite and having eigenvalues between 0 and 1.

\textbf{Optimizing} $\Gamma_t$ \textbf{given} $\mathbf{P}_t$ \textbf{and} $\Omega_t$: When $\mathbf{P}_t$ {and} $\Omega_t$ are fixed, $\Gamma_t$ can be obtained by solving the following convex optimization problem:
\begin{align}\label{objfun4}
\Gamma_t&=\arg\min_{\Gamma\succeq 0} J_3(\Gamma; \mathbf{P}_t, \mathcal{S}_t)\nonumber\\
&=\arg\min_{\Gamma\succeq 0}\alpha \ell(\Gamma; \mathbf{P}_t, \mathcal{S}_t)+ \eta \Delta_\phi(\Gamma, \mathbf{I}).
\end{align}

The necessary condition for the optimality is $\frac{\partial J_3(\Gamma; \mathbf{P}_t, \mathcal{S}_t)}{\partial \Gamma}=0$. Therefore, we obtain the following closed form solution:
\begin{align}\label{gam}
\Gamma_t \!=\!(\mathbf{I} \!+\! \frac{\eta}{\alpha}(\mathbf{C}_{t,\mathbf{YY}}\!-\!
\mathbf{\small{C}}_{t,\mathbf{XY}}^T\mathbf{P}_t^T
\!-\!
\mathbf{P}_t\mathbf{C}_{t,\mathbf{XY}}
\!+\!
\mathbf{P}_t\mathbf{C}_{t,\mathbf{XX}}\mathbf{P}_t^T))^{-1}.
\end{align}

After obtaining the solution $\Gamma_t$, we want to show that $\Gamma_t$ is positive definite and has eigenvalues no more than 1 if we initialize with a $\Gamma_0$ being positive definite and having eigenvalues no more than 1.

To show that $\Gamma_t$ is positive-definite, it suffices to show that $\mathbf{C}_{t,\mathbf{YY}}\!-\!
\mathbf{\small{C}}_{t,\mathbf{XY}}^T\mathbf{P}_t^T
-
\mathbf{P}_t\mathbf{C}_{t,\mathbf{XY}}
+
\mathbf{P}_t\mathbf{C}_{t,\mathbf{XX}}\mathbf{P}_t^T$ is positive semi-definite, and it suffices to show that for each $i$,
$\by_i \by_i^T - \by_i \bx_i^T \pt^T - \pt^T \bx_i \by_i^T + \pt \bx_i \bx_i^T \pt^T$ is positive semi-definite.

For each pair $\bx_i, \by_i$ we want to find a ${\mathbf{P}_i^*}$ such that $\by_i = {\mathbf{P}_i^*} \bx_i$. This is only impossible if $\bx_i = \mathbf{0}$ but $\by_i \neq \mathbf{0}$. In the later case we are already done since $\by_i \by_i^T - \by_i \bx_i^T \pt^T - \pt^T \bx_i \by_i^T + \pt \bx_i \bx_i^T \pt^T = \by_i \by_i^T$ which is positive semi-definite.

Notice that if the data have no noise and strictly follow a linear condition going through the origin, then ${\mathbf{P}_i^*}$ are all the same, and the ${\mathbf{P}_i^*}$ will be the true $\mathbf{P}$.

Notice that
\begin{align}
&  \by_i \by_i^T - \by_i \bx_i^T \pt^T - \pt^T \bx_i \by_i^T + \pt \bx_i \bx_i^T \pt^T \nonumber\\
 =& {\mathbf{P}_i^*} \bx_i \bx_i^T {\mathbf{P}_i^*} - {\mathbf{P}_i^*} \bx_i \bx_i^T \pt - \pt \bx_i \bx_i^T {\mathbf{P}_i^*}^T + \pt \bx_i \bx_i^T \pt^T  \nonumber\\
 = & {\mathbf{P}_i^*} \bx_i \bx_i^T ({\mathbf{P}_i^*} - \pt^T)  - \pt \bx_i \bx_i^T ({\mathbf{P}_i^*} - \pt) \nonumber\\
 = & ({\mathbf{P}_i^*} -\mathbf{P}_t) \bx_i \bx_i^T ({\mathbf{P}_i^*} -\mathbf{P}_t)^T.\label{compute}
\end{align}
Thus this is positive semi-definite.
We can employ the same strategy to prove that $\Gamma_t$ has eigenvalues between 0 and 1.

Finally, we summarize the procedure of MORES in Alg. 1.
\begin{table}
\begin{center}
\label{mores11}
\begin{tabular}{l}
\hline
\textbf{Alg. 1} \   Multiple-Output REgression for Streaming Data(MORES) \\
\hline
\textbf{Input:} Data streams $\{(\mathbf{x}_1,\mathbf{y}_1), (\mathbf{x}_2,\mathbf{y}_2),\cdots\}$ that arrive \\
\ \ \ \ \ \ \ \ \ \ \ \ \ \ one sample each time; \\
 \ \ \ \ \ \ \ \ \ \ \ \ Parameters: $\alpha,\beta,\eta$, and the forgetting factor $\mu$;\\
\textbf{Initialize} $\mathbf{P}_0=\textbf{0}_{d\times m}$, $\mathbf{C}_{0,\mathbf{XX}}=\mathbf{0}_{d\times d}$,
$\mathbf{C}_{0,\mathbf{XY}}=\mathbf{0}_{d\times m}$, \\
\ \ \ \ \ \ \ \ \ \ \ \ \ \ \ \ $\mathbf{C}_{0,\mathbf{YY}}=\mathbf{0}_{m\times m}$, and $\Omega_0$ = $\Gamma_0$ =$\mathbf{I}_{m\times m}$;
\\
\textbf{Method}\\
1.\ \ \ $\textbf{for} \ \ t=1, 2, \cdots $ \\
2.\ \ \ \ \ \ \ \ $\mathbf{C}_{t,\mathbf{YY}}=\mu\mathbf{C}_{t-1,\mathbf{YY}}+\mathbf{y}_t\mathbf{y}_t^T$;\\
3.\ \ \ \ \ \ \ \ $\mathbf{C}_{t,\mathbf{XY}}=\mu\mathbf{C}_{t-1,\mathbf{XY}}+\mathbf{x}_t\mathbf{y}_t^T$;\\
4.\ \ \ \ \ \ \ \ $\mathbf{C}_{t,\mathbf{XX}}=\mu\mathbf{C}_{t-1,\mathbf{XX}}+\mathbf{x}_t\mathbf{x}_t^T$;\\
5.\ \ \ \ \ \ \ \ Update $\mathbf{P}_t$ based on (\ref{phat1}),(\ref{phat2}), and (\ref{p});\\
6.\ \ \ \ \ \ \ \ Update $\Omega_t$ based on (\ref{omegatt});\\
7.\ \ \ \ \ \ \ \ Update $\Gamma_t$ based on (\ref{gam});\\
8. \ \ \textbf{end}\\
\textbf{end Method} \\
\textbf{Output:} Regression coefficient matrix $\mathbf{P}_t\in \mathbb{R}^{m\times d}.$\\
\hline
\end{tabular}
\end{center}
\end{table}

\subsection{Theoretical Analysis}\label{theory}
Here we prove that our algorithm will converge and the limit of $\mathbf{P}_t$ is the true $\mathbf{P}^{\ast}$.
Suppose that the data arrive according to the rule $\mathbf{y}_n = \mathbf{P}^*\mathbf{x}_n$.
We will respectively prove that $\mathbf{P}_t$, $\Omega_t$ and $\Gamma_t$ converge in this case.

We propose some very mild conditions on the inputs $\mathbf{X}_t$'s. We assume that the inputs are bounded, and thus $\cxxt$ is bounded. We also assume that $\cxxt$ often is not singular, and there exists an $\delta > 0$ such that there are infinite many $t$ with the eigenvalues of $\cxxt$ being all greater than $\delta$. This is also a mild condition because $\cxxt$ often has full rank unless the inputs are strongly linearly dependent.

By the arguments above, we have obtained that
\begin{equation}
 \Omega_{t-1}\mathbf{P}_t+\alpha\Gamma_{t-1}\mathbf{P}_t \mathbf{C}_{t,\mathbf{XX}} =\Omega_{t-1}\mathbf{P}_{t-1}+\alpha \Gamma_{t-1}\mathbf{C}_{t,\mathbf{XY}}^T .\nonumber
\end{equation}
\begin{equation}
\Omega_t^{-1} = \frac{1}{\beta + \rho} \left( \beta \Omega_{t-1}^{-1} + \rho \mathbf{I} +  (\mathbf{P}_t- \mathbf{P}_{t-1}) (\mathbf{P}_t- \mathbf{P}_{t-1})^T \right)^{-1}. \nonumber  \label{omega}
\end{equation}
\begin{equation}
\Gamma_t = \left( \mathbf{I} + \frac{\eta}{\alpha} (\mathbf{P}^* - \mathbf{P}_{t-1}) \mathbf{C}_{t, \mathbf{XX}} (\mathbf{P}^* - \mathbf{P}_{t-1})^T \right)^{-1}.   \label{gamma}
\end{equation}

If we let $\at = \pt - \pstat$, we have
\beq
\at  + \alpha \Omega_{t-1}^{-1} \gal \at \cxxt = \atl. \label{A}
\eeq
We can look at the Frobenius norm of $\at, \atl$.
\begin{align}
 \|\atl \|_F^2   = & \| \at+ \alpha \Omega_{t-1}^{-1} \gal \at \cxxt\|_F^2 \nonumber \\
  =&  tr ( (\at+ \alpha \Omega_{t-1}^{-1} \gal \at \cxxt)   \nonumber\\
   &\times (\at+ \alpha \Omega_{t-1}^{-1} \gal \at \cxxt)^T )\nonumber\\
 =& tr(\at \at^T) + \| \alpha \Omega_{t-1}^{-1} \gal \at \cxxt\|_F^2  \nonumber\\
&+2 tr\left( \at \alpha \cxxt^T \at^T \Gamma_{t-1}^T {\Omega_{t-1}^{-1}}^T \right).
\end{align}
Therefore
 \begin{align}
 & \| \atl\|_F^2 - \| \at\|_F^2  \nonumber \\
 = & \| \alpha \Omega_{t-1}^{-1} \gal \at \cxxt\|_F^2  \nonumber \\
  & + 2 tr\left( \at \alpha \cxxt^T \at^T \Gamma_{t-1}^T {\Omega_{t-1}^{-1}}^T \right). \label{main}
 \end{align}
We want to study the eigenvalues of $\at \alpha \cxxt^T \at^T \Gamma_{t-1}^T {\Omega_{t-1}^{-1}}^T$, i.e., the eigenvalues of
$\alpha \Omega_{t-1}^{-1} \gal \at \cxxt \at^T$.
First notice that $\at \cxxt \at^T$ is a positive semi-definite Hermitian matrix as $\cxxt$ is.
Also notice that $\Omega_{t-1}^{-1}$ and $\gal$ are both positive definite Hermitian matrices.

We prove the following lemma.
\begin{lemma}\label{real}
If $\mathbf{A}$ is Hermitian positive definite and $\mathbf{B}$ is Hermitian, then $\mathbf{AB}$ has all real eigenvalues.
\end{lemma}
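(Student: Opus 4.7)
My plan is to reduce the eigenvalue problem for $\mathbf{AB}$ to that of a Hermitian matrix by exploiting the existence of a Hermitian positive definite square root of $\mathbf{A}$. Since $\mathbf{A}$ is Hermitian positive definite, the spectral theorem guarantees a unique Hermitian positive definite square root $\mathbf{A}^{1/2}$, which is in particular invertible with inverse $\mathbf{A}^{-1/2}$ also Hermitian positive definite.

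The key computational step is to perform the similarity transformation
\[
\mathbf{A}^{-1/2} (\mathbf{AB}) \mathbf{A}^{1/2} = \mathbf{A}^{1/2} \mathbf{B} \mathbf{A}^{1/2}.
\]
This identity follows from writing $\mathbf{A} = \mathbf{A}^{1/2}\mathbf{A}^{1/2}$ and cancelling one factor on the left. The right-hand side $\mathbf{A}^{1/2} \mathbf{B} \mathbf{A}^{1/2}$ is Hermitian because both $\mathbf{A}^{1/2}$ and $\mathbf{B}$ are Hermitian:
\[
(\mathbf{A}^{1/2} \mathbf{B} \mathbf{A}^{1/2})^* = (\mathbf{A}^{1/2})^* \mathbf{B}^* (\mathbf{A}^{1/2})^* = \mathbf{A}^{1/2} \mathbf{B} \mathbf{A}^{1/2}.
\]

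From here the conclusion is immediate: Hermitian matrices have only real eigenvalues, and similar matrices have identical spectra, so $\mathbf{AB}$, being similar to the Hermitian matrix $\mathbf{A}^{1/2} \mathbf{B} \mathbf{A}^{1/2}$, has all real eigenvalues. I do not anticipate any substantial obstacle; the only subtlety worth flagging is the need for $\mathbf{A}$ to be positive definite (not merely positive semi-definite), so that $\mathbf{A}^{1/2}$ is invertible and the similarity transformation is well-defined. In the application to $\Gamma_{t-1}^{-1}\Omega_{t-1}$ in the preceding section, this is precisely the content of the earlier observation that $\Omega_{t-1}$ and $\Gamma_{t-1}$ have eigenvalues strictly between $0$ and $1$, hence are positive definite.
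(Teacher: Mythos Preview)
Your proof is correct and takes essentially the same approach as the paper: both factor $\mathbf{A}$ and reduce the question to the spectrum of the Hermitian matrix $\mathbf{D}^*\mathbf{B}\mathbf{D}$ (in your case with $\mathbf{D}=\mathbf{A}^{1/2}$). The only cosmetic difference is that the paper invokes the fact that $\mathbf{XY}$ and $\mathbf{YX}$ share eigenvalues, whereas you use an explicit similarity via $\mathbf{A}^{-1/2}$; both are equally valid here since $\mathbf{A}$ is positive definite.
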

\begin{proof}
Since $\mathbf{A}$ is Hermitian positive definite, there exists $\mathbf{D}$ such that $\mathbf{A} = \mathbf{DD}^*$. So the eigenvalues of $\mathbf{AB}$ are the eigenvalues of $\mathbf{DD}^*\mathbf{B}$, and are also the eigenvalues of $\mathbf{D}^* \mathbf{BD}$. This is because the eigenvalues of $\mathbf{XY}$ are the eigenvalues of $\mathbf{YX}$.\footnote{This is a well-known fact that the eigenvalue of $\mathbf{XY}$ is an eigenvalue of $\mathbf{YX}$. This is because if $\mathbf{XYv} = \lambda \mathbf{v}$, then $\mathbf{YXYv = Y}\lambda \mathbf{v} = \lambda \mathbf{Yv}$. Then $\lambda$ is an eigenvalue of $\mathbf{YX}$ with eigenvector $\mathbf{Yv}$. And as a consequence of this, if $\mathbf{XY}$ has an eigenvalue decomposition, then $\mathbf{YX}$ also has eigenvalue decomposition, with the same set of eigenvalues, while the eigenvectors $\mathbf{v}$ for $\mathbf{XY}$ becomes $\mathbf{Yv}$ for $\mathbf{YX}$.}
In addition, since $\mathbf{D}^* \mathbf{BD}$ is also a Hermitian matrix, it has all eigenvalues being real. Hence, $\mathbf{AB}$ has all real eigenvalues.
\end{proof}
\begin{lemma}\label{pd}
If $\mathbf{A}$ is Hermitian positive definite, and $\mathbf{B}$ is positive definite. Then $\mathbf{AB}$ is positive definite.
\end{lemma}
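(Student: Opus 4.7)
The plan is to piggyback on the decomposition idea already used in Lemma \ref{real}, and reduce the problem to a genuinely Hermitian positive definite matrix whose eigenvalues we can read off directly. Since $\mathbf{A}$ is Hermitian positive definite, it admits a factorization $\mathbf{A} = \mathbf{D}\mathbf{D}^*$ with $\mathbf{D}$ square and invertible (e.g.\ a Cholesky factor or $\mathbf{D} = \mathbf{A}^{1/2}$). Then $\mathbf{AB} = \mathbf{D}\mathbf{D}^*\mathbf{B}$, and by the eigenvalue-swap fact recalled in the footnote to Lemma \ref{real}, $\mathbf{AB}$ and $\mathbf{D}^*\mathbf{B}\mathbf{D}$ have the same set of eigenvalues.

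Next I would argue that $\mathbf{D}^*\mathbf{B}\mathbf{D}$ is Hermitian positive definite. Hermicity is immediate: $(\mathbf{D}^*\mathbf{B}\mathbf{D})^* = \mathbf{D}^*\mathbf{B}^*\mathbf{D} = \mathbf{D}^*\mathbf{B}\mathbf{D}$, using that $\mathbf{B}$ is Hermitian. For positive definiteness, for any nonzero $\mathbf{v}$ set $\mathbf{w} = \mathbf{D}\mathbf{v}$; since $\mathbf{D}$ is invertible, $\mathbf{w} \neq \mathbf{0}$, and then $\mathbf{v}^*(\mathbf{D}^*\mathbf{B}\mathbf{D})\mathbf{v} = \mathbf{w}^*\mathbf{B}\mathbf{w} > 0$ because $\mathbf{B}$ is positive definite. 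Consequently all eigenvalues of $\mathbf{D}^*\mathbf{B}\mathbf{D}$ are real and strictly positive, and transporting this back through the eigenvalue-swap identity shows that every eigenvalue of $\mathbf{AB}$ is real and strictly positive, which is the meaning of ``$\mathbf{AB}$ positive definite'' in the sense needed by the convergence argument later (note that $\mathbf{AB}$ itself is typically not Hermitian).

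The only subtlety worth flagging is exactly this interpretation: since $\mathbf{AB}$ need not commute, it need not be Hermitian, so ``positive definite'' here has to be read as ``all eigenvalues are positive'' rather than ``$\mathbf{x}^*\mathbf{AB}\mathbf{x} > 0$ for all nonzero $\mathbf{x}$.'' I would mention this briefly at the start of the proof so the reader knows which notion is meant; once that is clarified, the proof is a two-line corollary of the $\mathbf{D}\mathbf{D}^*$ trick from Lemma \ref{real}, and no harder calculation is needed.
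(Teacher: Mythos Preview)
Your proposal is correct and follows essentially the same route as the paper: factor $\mathbf{A}=\mathbf{D}\mathbf{D}^*$, swap eigenvalues to pass from $\mathbf{AB}$ to $\mathbf{D}^*\mathbf{B}\mathbf{D}$, and then verify that the latter is positive definite via $\langle \mathbf{D}^*\mathbf{B}\mathbf{D}\mathbf{v},\mathbf{v}\rangle=\langle \mathbf{B}(\mathbf{D}\mathbf{v}),\mathbf{D}\mathbf{v}\rangle>0$. Your version is in fact a bit more careful than the paper's (you invoke invertibility of $\mathbf{D}$ to get strict positivity and you flag that $\mathbf{AB}$ is generally non-Hermitian so ``positive definite'' must be read as ``all eigenvalues positive''), but the underlying argument is identical.
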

\begin{proof}
Since $\mathbf{A}$ is positive definite Hermitian, we can write $\mathbf{A} = \mathbf{DD}^*$.
$\mathbf{B}$ being positive definite means that for any vector $\mathbf{v}$, we have $\langle \mathbf{Bv}, \mathbf{v} \rangle \geq 0$.

We know eigenvalues of $\mathbf{AB}$ are eigenvalues of $\mathbf{DD}^* \mathbf{B}$, and are also eigenvalues of $\mathbf{D}^* \mathbf{B D}$.
For $\mathbf{D}^* \mathbf{B D}$, we have that $\langle \mathbf{D}^* \mathbf{B Dv}, \mathbf{v} \rangle = \langle \mathbf{BDv}, \mathbf{Dv} \rangle = \langle \mathbf{B(Dv)}, \mathbf{(Dv)} \rangle \geq 0$. Therefore we have completed the proof.
\end{proof}

Since $\Omega_{t-1}^{-1}, \gal$ are both Hermitian positive definite, by Lemma \ref{pd},  $\Omega_{t-1}^{-1} \gal$ is positive definite. Together with the fact that $\at \cxxt \at^T$ is Hermitian positive definite, we have that by Lemma \ref{real}, $\Omega_{t-1}^{-1} \gal \at \cxxt \at^T$ has all positive real roots.


We thus have proved that in (\ref{main}), $2 tr\left( \at \alpha \cxxt^T \at^T \Gamma_{t-1}^T {\Omega_{t-1}^{-1}}^T \right) \geq 0$. So we have that $\| \at \|_F \leq \| \atl \|_F$.
Because $\| \at\|_F$ is monotone decreasing, we must have that it will converge to a value. If the limit value is 0, it means that our $\pt$ will eventually converge to the true $\mathbf{P}^*$, and we are done.

Otherwise, we can assume that $\| \at \|_F^2$ converges to a positive value $m^* > 0$. We will prove a contradiction in this case.

By (\ref{main}) again, we have that
\begin{align} &\| \at - \atl \|_F^2  \nonumber \\
= & \alpha^2 \| \Omega_{t-1}^{-1} \gal \at \cxxt \|_F^2 \nonumber\\
 \leq & \alpha^2 \| \at\|_F^2 \| \cxxt \Omega_{t-1}^{-1} \gal \|_F^2\label{31}.
\end{align}
The last inequality holds since for matrix norm, we have $\| \mathbf{AB} \| \leq \|\mathbf{A}\| \|\mathbf{B}\|$.

If we have a sequence of $t$ such that $\| \cxxt \Omega_{t-1}^{-1} \gal \|_F \to 0$ then we are done.
Because if this is the case, together with the fact that $\| \at \|_F$ is bounded (as it converges to $m^*$), (\ref{31}) will converge to 0; Since (\ref{31}) is an upper-bound for $\| \at - \atl \|_F$, we thus can conclude that $\| \at - \atl \|_F$ converges to 0.
This is equivalent to $\| \mathbf{P}_t- \mathbf{P}_{t-1}\|_F$ converges to 0.
To show that $\at$ converges to 0, we will first use the following lemma.
\begin{lemma}\label{omegai}
If $\| \at - \atl \|_F$ is bounded, then $\Omega_t$ converges to $\mathbf{I}$.
\end{lemma}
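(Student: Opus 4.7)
The key identity is the closed-form update (\ref{omegatt}), which rearranges to
\[
\Omega_t^{-1} = \frac{\beta}{\beta+\rho}\,\Omega_{t-1}^{-1} + \frac{\rho}{\beta+\rho}\,\mathbf{I} + \frac{1}{\beta+\rho}\,\mathbf{M}_t,
\]
where $\mathbf{M}_t := (\at - \atl)(\at - \atl)^T$ (using $\pt - \pl = \at - \atl$). My plan is to subtract $\mathbf{I}$ from both sides and set $\mathbf{F}_t := \Omega_t^{-1} - \mathbf{I}$; after collecting the $\mathbf{I}$ coefficients this yields the contraction-plus-noise recursion
\[
\mathbf{F}_t = r\,\mathbf{F}_{t-1} + \frac{1}{\beta+\rho}\,\mathbf{M}_t, \qquad r := \frac{\beta}{\beta+\rho} < 1.
\]

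Unrolling gives $\mathbf{F}_t = r^{t}\mathbf{F}_0 + (\beta+\rho)^{-1}\sum_{k=1}^{t} r^{t-k}\mathbf{M}_k$. The initial-condition term decays geometrically in Frobenius norm, so all the work lies in the convolution sum. Here I will invoke what the preceding paragraph actually establishes, namely $\|\at - \atl\|_F \to 0$ (the conclusion the lemma's hypothesis is intended to supply); this gives $\|\mathbf{M}_k\|_F \leq \|\at - \atl\|_F^2 \to 0$. A standard three-$\varepsilon$ split then finishes it: fix $\varepsilon > 0$, choose $K$ so that $\|\mathbf{M}_k\|_F < \varepsilon$ for $k > K$, bound the head $\sum_{k\le K}$ by $r^{t-K}$ times a constant (which vanishes as $t \to \infty$), and bound the tail by $\varepsilon/(1-r) = \varepsilon(\beta+\rho)/\rho$. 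This shows $\limsup_t \|\mathbf{F}_t\|_F \leq \varepsilon(\beta+\rho)/\rho$ for every $\varepsilon > 0$, hence $\mathbf{F}_t \to \mathbf{0}$. Because matrix inversion is continuous at $\mathbf{I}$, this yields $\Omega_t \to \mathbf{I}$.

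The main subtlety is interpreting the hypothesis of the lemma. Literal boundedness of $\|\at - \atl\|_F$ only yields boundedness of $\|\mathbf{M}_k\|_F$ and hence boundedness of $\mathbf{F}_t$, not convergence to $\mathbf{I}$; the usable hypothesis is $\|\at - \atl\|_F \to 0$, which is precisely what inequality (\ref{31}) and the paragraph surrounding it deliver, so I will restate this clarified assumption at the outset of the proof. A secondary but essential point is that $\rho > 0$ is required so that $r < 1$ and the perturbation is geometrically absorbed; without the strict contraction the argument breaks down. Neither obstacle is technically deep---both amount to bookkeeping around the previously derived facts.
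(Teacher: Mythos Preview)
Your approach is essentially identical to the paper's: both subtract $\mathbf{I}$ from the closed-form update (\ref{omegatt}) to obtain the contraction-plus-noise recursion with ratio $r=\beta/(\beta+\rho)$, then split the resulting sum at a cutoff index beyond which the perturbation $\|(\pt-\pl)(\pt-\pl)^T\|_F$ is below $\varepsilon$, bounding the head geometrically and the tail by a constant times $\varepsilon$. Your observation about the hypothesis is well taken and matches the paper exactly---the paper's own proof also uses $\|\pt-\pl\|_F\to 0$ rather than mere boundedness, so the lemma as stated is imprecise in the same way you flag.
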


\begin{proof}
 We can rewrite (\ref{omegatt}) as
 \[ \Omega_t^{-1} - \mathbf{I} = \frac{\beta}{\beta + \rho} (\Omega_{t-1}^{-1} - \mathbf{I}) + \frac{1}{\beta + \rho}(\mathbf{P}_t- \mathbf{P}_{t-1})(\mathbf{P}_t- \mathbf{P}_{t-1})^T.\]
 Based on triangular inequality, we have that
 \begin{align}
  \|\Omega_t^{-1} - \mathbf{I} \|_F \leq &\frac{\beta}{\beta + \rho}  \|(\Omega_{t-1}^{-1} - \mathbf{I})\|_F \nonumber\\
 &+ \frac{1}{\beta \!+\! \rho}\|(\mathbf{P}_t \!-\! \mathbf{P}_{t-1})(\mathbf{P}_t \!-\! \mathbf{P}_{t-1})^T\|_F.\label{difference}
 \end{align}
 Since $\|(\mathbf{P}_t- \mathbf{P}_{t-1})(\mathbf{P}_t- \mathbf{P}_{t-1})^T\|_F$ converges to 0, we have that for any $\epsilon > 0$, we can find a $T$ large enough such that for all $t \geq T$, $\|(\mathbf{P}_t- \mathbf{P}_{t-1})(\mathbf{P}_t- \mathbf{P}_{t-1})^T\|_F \leq \epsilon$.   By iteratively applying (\ref{difference}), we have that
 \begin{align}
 \|\Omega_t^{-1} - \mathbf{I} \|_F \leq &\left( \frac{\beta}{\beta + \rho} \right)^{t- T} \|\Omega_T^{-1} - \mathbf{I} \|_F \nonumber\\
 &+ \frac{1}{\beta + \rho} \sum_{j = 1}^{t-T-1} \left( \frac{\beta}{\beta + \rho} \right)^{j} \epsilon.
 \end{align}
 As $t$ goes to infinity, we can see that $\left( \frac{\beta}{\beta + \rho} \right)^{t- T} \|\Omega_T^{-1} - \mathbf{I} \|_F$ goes to 0; by computing the sum of the power series $\frac{1}{\beta + \rho} \sum_{j = 1}^{t-T-1} \left( \frac{\beta}{\beta + \rho} \right)^{j} \epsilon$ we can see that it is bounded above by $\epsilon$ multiplying by a constant only depending on $\beta, \rho$. Since $\epsilon$ is arbitrary, we know that $\Omega_t^{-1}\!-\!\mathbf{I}$ will converge to $\mathbf{0}$.

 \end{proof}

Since $\| \cdot \|_F$ gives a metric for a complete normed space and $\| \mathbf{A_t} - \mathbf{A_{t-1}}\|_F$ converges to 0, we have that this Cauchy sequence $\mathbf{A}_t$'s will converge to a limit $\mathbf{A}$. Similarly, $\mathbf{P}_t$'s converge to a limit.
By (\ref{A}), we have that
\begin{align}
 \|  \mathbf{A_t} \!-\! \mathbf{A_{t-1}}\|_F \!\geq\! \alpha \sigma_{min}(\Omega_{t-1}^{-1}) \sigma_{min}(\gal) \sigma_{min}(\cxxt) \|\mathbf{A}_t\|_F. \label{bound}
 \end{align}
As $\cxxt$ has singular values bounded away from 0 and $\mathbf{P}_t$'s converge, we have that $\sigma_{min}(\gal)$ is bounded away from 0. Because of $\cxxt, \gal, \oml^{-1}$ all having eigenvalues bounded away from 0, we have that by (\ref{bound}), $\|\mathbf{A}_t\|_F$ converges to 0.

%

Now suppose otherwise that $\| \cxxt \Omega_{t-1}^{-1} \gal \|_F$ does not converge to 0. Then it is relatively large.
Notice by (\ref{main}), we have that
\[ \| \atl \|_F^2 - \| \at \|_F^2 \geq \alpha^2 \| \Omega_{t-1}^{-1} \gal \at \cxxt \|_F^2. \]
The right hand side satisfies
\[ \| \Omega_{t-1}^{-1} \gal \at \cxxt \|_F \geq \sigma_{min} (\cxxt \Omega_{t-1}^{-1} \gal ) \| \at\|_F \]
where $\sigma_{min}$ denotes the minimal singular value.

We will show that $\sigma_{min} (\Omega_{t-1}^{-1} \gal \at \cxxt)$ is bounded away from 0.
This is because
\begin{align}
& \sigma_{min} (\cxxt \Omega_{t-1}^{-1} \gal ) \nonumber\\
\geq & \sigma_{min} (\Omega_{t-1}^{-1}) \sigma_{min}(\gal) \sigma_{min}(\cxxt) \nonumber \\
\geq & \sigma_{min} (\gal) \sigma_{min} (\cxxt). \label{ine}
\end{align}
The last inequality holds because $\oml$ always has eigenvalues between 0 and 1.
We now look at the eigenvalues of $\gal$.  By (\ref{gamma}), we have that
$ \Gamma_{t-1}^{-1} = \mathbf{I} + \alpha \atl \cxxt \atl^T.$ Thus,
\[ \| \Gamma_{t-1}^{-1} \|_F \leq \| \mathbf{I}\|_F + \alpha \| \atl \|_F \| \cxxt \|_F \| \atl\|_F.\]
Since $\| \atl\|_F$ is bounded as it converges to a limit, and we have assumed that $\cxxt$ is bounded, we have that $\| \Gamma_{t-1}^{-1}\|_F$ is bounded from above.
Hence the largest eigenvalue of $\Gamma_{t-1}^{-1}$ is bounded from above, and the minimal eigenvalue of $\gal$ is bounded away from 0.

Now in (\ref{ine}),  since $\sigma_{min} (\gal)$ is bounded away from 0 and there are sufficient number of $t$ such that the eigenvalues of $\sigma_{min} (\cxxt)$ are not small, we that $\| \at \| - \| \atl\|$ is bounded away from 0. However, since $\| \at\|$ will eventually converge to a limit, it means $\| \at \|  - \| \atl \|$ converges to 0, contradiction.

After showing the convergence together with an lower bound for the convergence rate of $\| \at \|_F$, to complete the analysis, we will also show that $\Omega_t$ and $\Gamma_t$ converges to $\mathbf{I}$.

Since $\mathbf{P}_t$ converges to $\mathbf{P}^*$, it is clear that $\| \at - \atl \|_F$ is bounded; thus the conditions in Lemma \ref{omegai} are satisfied. Lemma \ref{omegai} shows that $\Omega_t$ converges to $\mathbf{I}$.

We are left to show that $\Gamma_t$ converges to $\mathbf{I}$. We will use formula (\ref{gamma}):
\[ \Gamma_t = \left( \mathbf{I} + \frac{\eta}{\alpha} (\mathbf{P}^* - \mathbf{P}_{t-1}) \mathbf{C}_{t, \mathbf{XX}} (\mathbf{P}^* - \mathbf{P}_{t-1})^T \right)^{-1}. \nonumber \]
The above equation implies
\begin{equation}  \| \Gamma_t^{-1} - \mathbf{I}  \|_F  = \frac{\eta}{\alpha} \| (\mathbf{P}^* - \mathbf{P}_{t-1}) \mathbf{C}_{t, \mathbf{XX}} (\mathbf{P}^* - \mathbf{P}_{t-1})^T \|_F. \label{gammabound}
\end{equation}
We know that $(\mathbf{P}^* - \mathbf{P}_{t-1})^T $ converges to 0, and that $\cxxt$ is bounded, thus the right hand side of (\ref{gammabound}) converges to 0. Since $\| \Gamma_t^{-1} - \mathbf{I}  \|_F$ is bounded above by the right hand side of (\ref{gammabound}), it is clear that $\| \Gamma_t^{-1} - \mathbf{I}  \|_F$ converges to 0; and thus $\Gamma_t$ converges to $\mathbf{I}$.

To conclude, we have proven that under a very mild condition that the inputs are not highly dependent on each other and that the inputs can be normalized, then if the incoming data satisfies a clear linear equation, our analysis shows that the learned $\mathbf{P}$ will converge to the true underlying $\mathbf{P}$ value with a satisfactory convergence rate. The learned $\Gamma, \Omega$ will be $\mathbf{I}.$

The above analysis is based on the assumption that $\mathbf{y} = \mathbf{P}^* \mathbf{x}$. However, in real life applications, it is always the case that the underlying distribution is not ideal. For this purpose, suppose each dimension of the output has an error term $\epsilon_{t,i}$, i.e., we assume that the underlying distribution follows
 \[ \mathbf{y}_t = \mathbf{P}^* \mathbf{x}_t + \overrightarrow{\epsilon_t}. \]
 We further assume that $\overrightarrow{\epsilon_t}$ are independent for different $t$'s, and for each dimension $i$, the bias is 0, i.e., the expectation $\E[\epsilon_{t,i}] = \mathbf{0}$.

 Similar to the computation as in (\ref{compute}), we would have
 \begin{align}
& \E[ \by_t \by_t^T - \by_t \bx_t^T \pt^T - \pt^T \bx_t \by_t^T + \pt \bx_t \bx_t^T \pt^T \nonumber]\\
= & \E [ ({\mathbf{P}_t^*} \bx_t + \overrightarrow{\epsilon_t})({\mathbf{P}_t^*} \bx_t + \overrightarrow{\epsilon_t})^T] - \E[({\mathbf{P}_t^*} \bx_t + \overrightarrow{\epsilon_t}) \bx_t^T \pt^T] \nonumber \\
  & -\E[\pt^T \bx_t ({\mathbf{P}_t^*} \bx_t + \overrightarrow{\epsilon_t})^T ] + \E[\pt \bx_t \bx_t^T \pt^T]. \label{compute2}
  \end{align}
Since $\E[\epsilon_t] = \mathbf{0}$, we have that (\ref{compute2}) can be rewritten as
\begin{align}
 =& \E[{\mathbf{P}_t^*} \bx_i \bx_t^T {\mathbf{P}_t^{*T}} \!-\! {\mathbf{P}_t^*} \bx_t \bx_t^T \pt^T \!-\! \pt \bx_t \bx_t^T {\mathbf{P}_t^*}^T \!+\! \pt \bx_t \bx_t^T \pt^T] \nonumber \\
  & + \E[ \overrightarrow{\epsilon_t} \overrightarrow{\epsilon_t}^T] \nonumber\\
 = & \E[{\mathbf{P}_t^*} \bx_t \bx_t^T ({\mathbf{P}_t^{*T}} \!-\! \pt^T)  \!-\! \pt \bx_t \bx_t^T ({\mathbf{P}_t^{*T}} \!-\! \pt^T)] \!+\!  \E[ \overrightarrow{\epsilon_t} \overrightarrow{\epsilon_t}^T]\nonumber\\
 = & \E[({\mathbf{P}_t^*} -\mathbf{P}_t) \bx_t \bx_t^T ({\mathbf{P}_t^*} -\mathbf{P}_t)^T] + \text{Cov}(\overrightarrow{\epsilon}). \label{compute3}
\end{align}

Therefore by (\ref{gam}), we have that when $\Gamma_t$ converges, it will be \[\Gamma^{-1} = \mathbf{I} + \frac{\eta}{\alpha} \text{Cov}(\overrightarrow{\epsilon}). \] When there are correlations between the noise in different channels, the limit of $\Gamma$ will reveal the correlation.


\subsection{Time Complexity Analysis}
In Algorithm 1, the most time-consuming part of MORES is to update $\mathbf{P}_t$, $\Omega_t$, and $\Gamma_t$, and the time cost of other parts can be ignored.
For updating $\mathbf{P}_t$, the complexity is $O(m^3 + m^2 d + m d^2)=O(m^3+ md^2)$.
Updating $\Gamma$ needs $O(m^3+d^2m)$. It costs $O(m^3 + m^2d)$ for updating $\Omega_t$.
Therefore, the total time complexity of MORES is of order $O(m^3 + md^2)$. This is a low-order polynomial in $m$ and $d$. The three updates takes abound the same amount of complexity.
\section{Experiments}
To evaluate the performance of MORES, we perform the experiments on both one synthetic dataset and three real-world datasets:  the Barrett WAM dataset \cite{NguyenTuong}, the stock price dataset,
and the weather dataset \cite{alvarez2008sparse}.
We compare MORES with an online multiple-output regression method, iS-PLS \cite{rothman2010sparse}, two online multi-task learning methods, ELLA\footnote{The implementation can be downloaded from {http://occam.olin.edu/node/2\#publications}.} \cite{eaton2013ella} and OMTL\footnote{We use $l_2$ norm as the loss function of OMTL to enable it for regression tasks.} \cite{saha2011online}. We also compare with two variants of PA algorithm in \cite{crammer2006online} called PA-I and PA-II, which are two classical online learning approaches for single regression tasks. We name our simple formulation of online multiple-output regression proposed at the beginning of Sect. 2 as SOMOR for short.

In the experimental study, we mainly focus on the accuracy of online regression prediction to evaluate our model's quality.
The popular metric, Mean Absolute Error (MAE), is used to measure the prediction quality. MAE is defined as: MAE $=\frac{1}{t}\sum_{i=1}^t|y_i-\widehat{y}_i|$, where $\widehat{y}_i$ denotes the estimated values of the $i$-th instance, and ${y}_i$ is the true response values.

There are some parameters to be set in advance. The parameters $\beta$ and $\eta$ are always set to 1 and 100 throughout the experiments, respectively (we found when $\beta =1, \eta = 100$, the performance was consistently good on all the datasets). The rest parameters are tuned from the space $\{ 10^{-2}, 10^{-1},\ldots, 10^3, 10^4\}$. We first use the first 100 samples from each dataset to tune the parameters, and then apply the optimal parameters to the rest samples for online multiple-output regression. To conduct fair experimental comparisons, the parameters in the other methods are tuned from the same search space with MORES.

\subsection{Synthetic Dataset}
\begin{figure}
\centering
\includegraphics[width=0.8\linewidth]{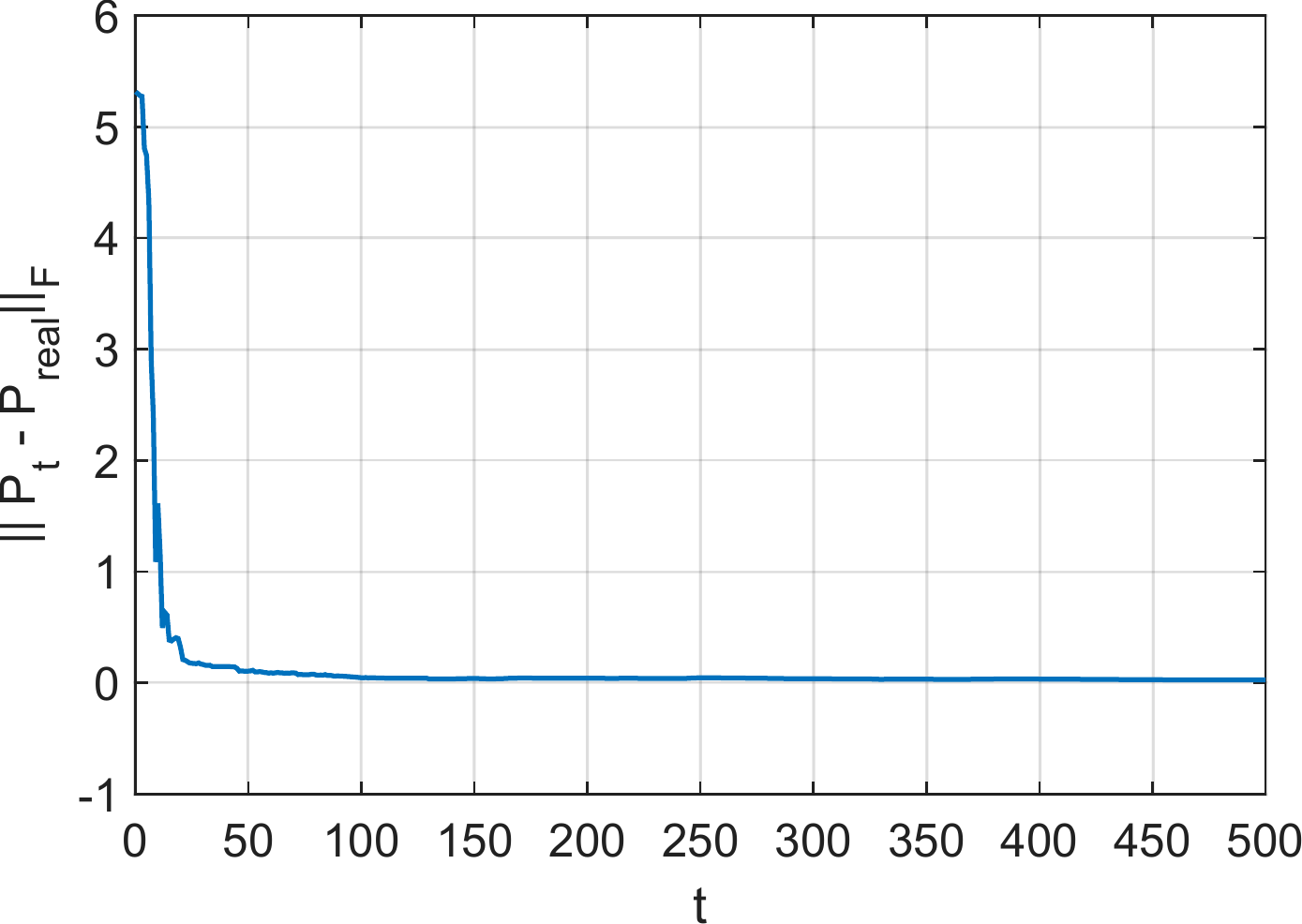}
\caption{Convergence study of the learned $\mathbf{P}_t$, where $t$ represents the total samples collected currently.}
\label{coverg}
\end{figure}

\begin{table}
\footnotesize
\caption{The learned correlation coefficient matrices at different $t$.}
\label{correlation}
\centering
\subtable[ The correlation matrix of the coefficients change at $t=50$  ]{
\begin{tabular}{c|c|c|c}
\hline
{Me} & 1st        &  2nd    &  3rd       \\
\hline
{PA}  &      1.00  & 0.20  &  0.83   \\
\hline
{PA} &       0.20 &  1.00 &  0.52   \\
\hline
{iS} &     0.83  & 0.52  &  1.00   \\
\hline
\end{tabular}
\label{tab:firsttable}
}
\qquad
\subtable[ The coefficient matrix of the residual errors at $t=50$ ]{
\begin{tabular}{c|c|c|c}
\hline
{Met} & 1st        &  2nd    &  3rd      \\
\hline
{PA}  &     1.00  &  0.04  &  0.51    \\
\hline
{PA} &        0.04   & 1.00   & 0.62  \\
\hline
{iS} &     0.51  &  0.62  &  1.00  \\
\hline
\end{tabular}
\label{tab:secondtable}
}

\subtable[The correlation matrix of the coefficients change at $t=500$  ]{
\begin{tabular}{c|c|c|c}
\hline
{Me} & 1st        &  2nd    &  3rd       \\
\hline
{PA}  &      1.00  & 0.01  &  0.11   \\
\hline
{PA} &       0.01 &  1.00 &  0.04   \\
\hline
{iS} &     0.11  & 0.04  &  1.00   \\
\hline
\end{tabular}
\label{tab:firsttable}
}
\qquad
\subtable[ The correlation matrix of the residual errors at $t=500$ ]{
\begin{tabular}{c|c|c|c}
\hline
{Met} & 1st        &  2nd    &  3rd      \\
\hline
{PA}  &     1.00  &  0.07  &  0.60   \\
\hline
{PA} &        0.07   & 1.00   & 0.60  \\
\hline
{iS} &     0.60  &  0.60  &  1.00  \\
\hline
\end{tabular}
\label{tab:secondtable}
}
\end{table}
We first generate an artificial dataset to conduct a ``proof of concept'' experiment before we perform experiments on real datasets. The synthetic dataset is generated as follows. First, we generate 500 samples $\mathbf{x}_i\in \mathbb{R}^{11}$ as the input vectors. The first 10 entry in the input vector is drawn from the standard normal distribution, and the last entry is 1 in order to learn the bias term. Then we generate two weight vectors $\mathbf{p}_1\in \mathbb{R}^{11}$ and $\mathbf{p}_2\in \mathbb{R}^{11}$  with each entry sampled from the standard normal distribution, and form two regression tasks by $y_{i,1}=\mathbf{p}_1^T\mathbf{x}_i+\epsilon_{i,1}$ and $y_{i,2}=\mathbf{p}_2^T\mathbf{x}_i+\epsilon_{i,2}$. $\epsilon_{i,1}$ and $\epsilon_{i,2}$ are the noise drawn from the normal distribution with mean 0 and standard deviation 0.1. Finally, we construct the third regression task by $y_{i,3}=y_{i,1}+y_{i,2}+\epsilon_{i,3}=\mathbf{p}_1^T\mathbf{x}_i+\mathbf{p}_2^T\mathbf{x}_i+\epsilon_{i,1}+\epsilon_{i,2}+\epsilon_{i,3}$, where $\epsilon_{i,3}$ is the noise sampled from the normal distribution with mean 0 and standard deviation 0.1. Based on this, we know there are correlations between the noise (e.g., the first channel and the third channel).

\begin{figure*}
\centering
\subfigure[IBM]{\includegraphics[width=0.325\linewidth]{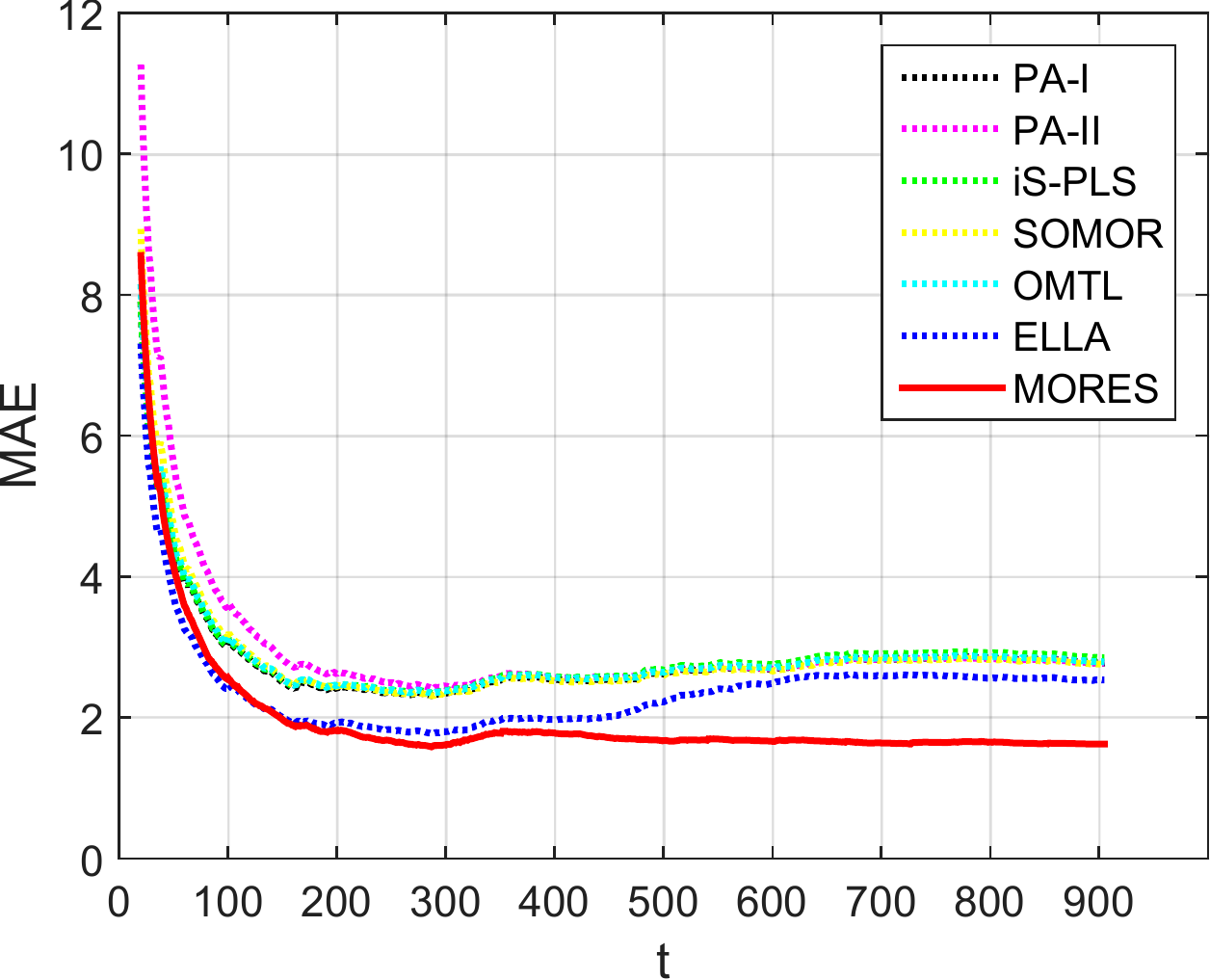}}
\subfigure[Yahoo]{\includegraphics[width=0.325\linewidth]{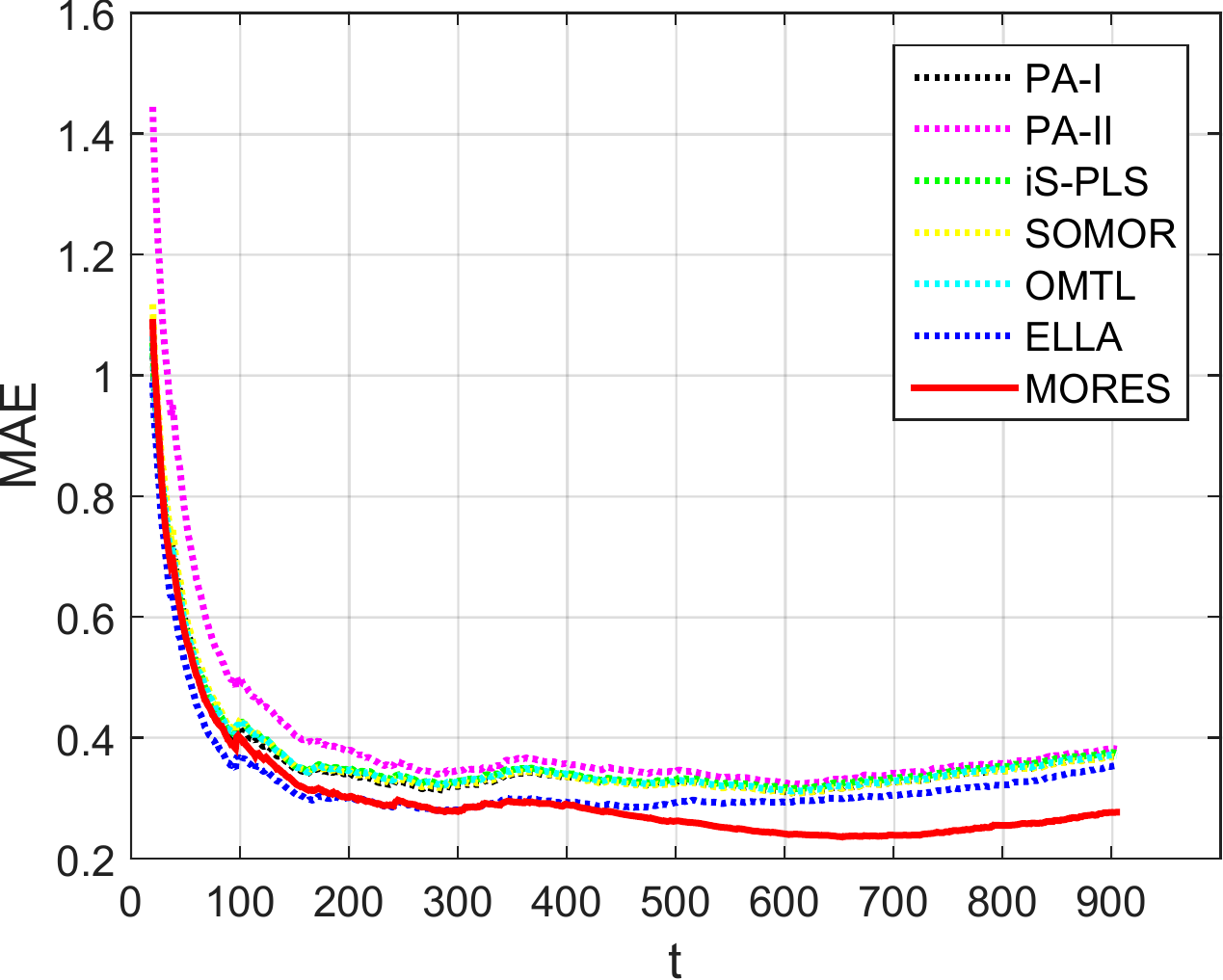}}
\subfigure[Microsoft]{\includegraphics[width=0.325\linewidth]{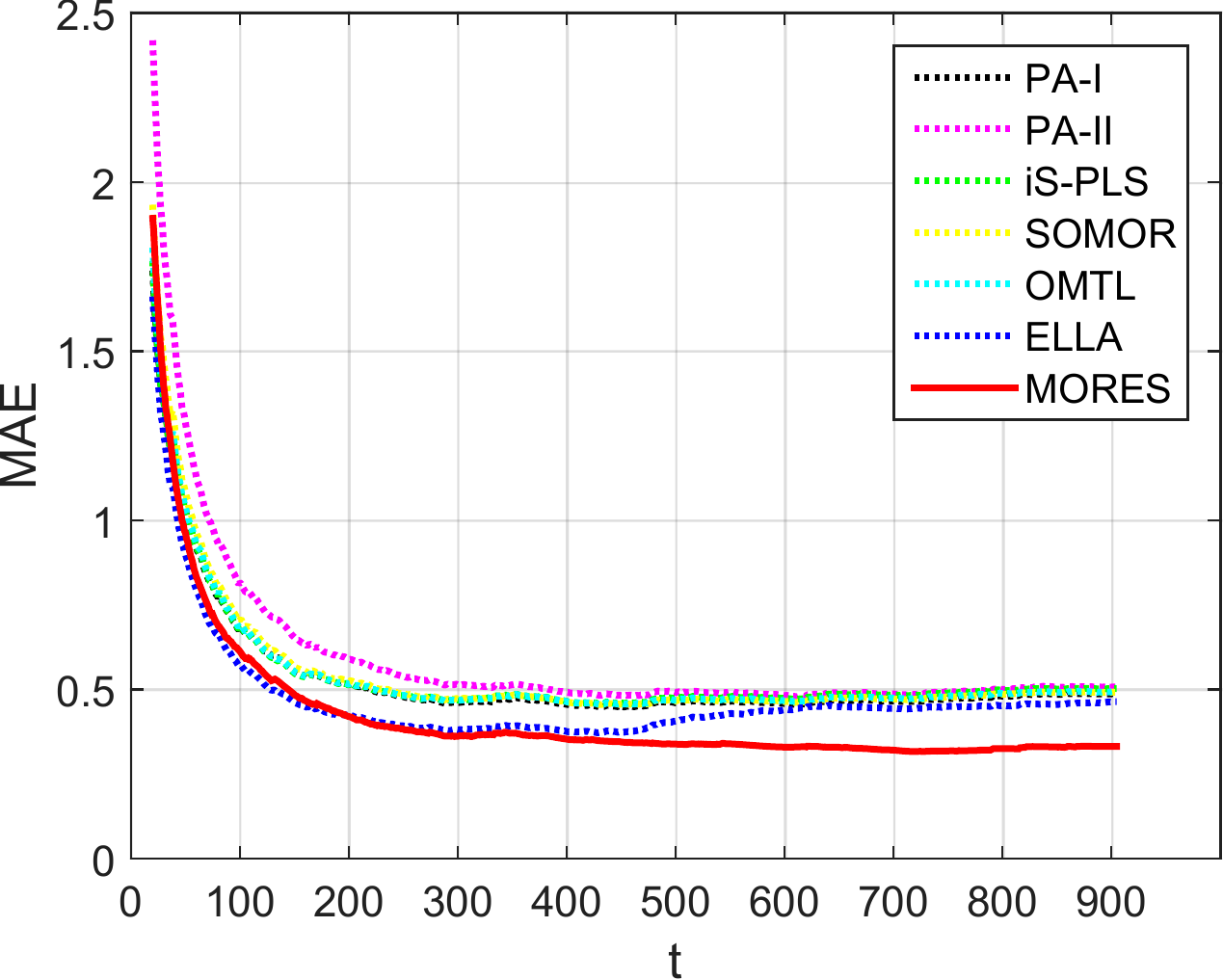}}
\subfigure[Apple]{\includegraphics[width=0.325\linewidth]{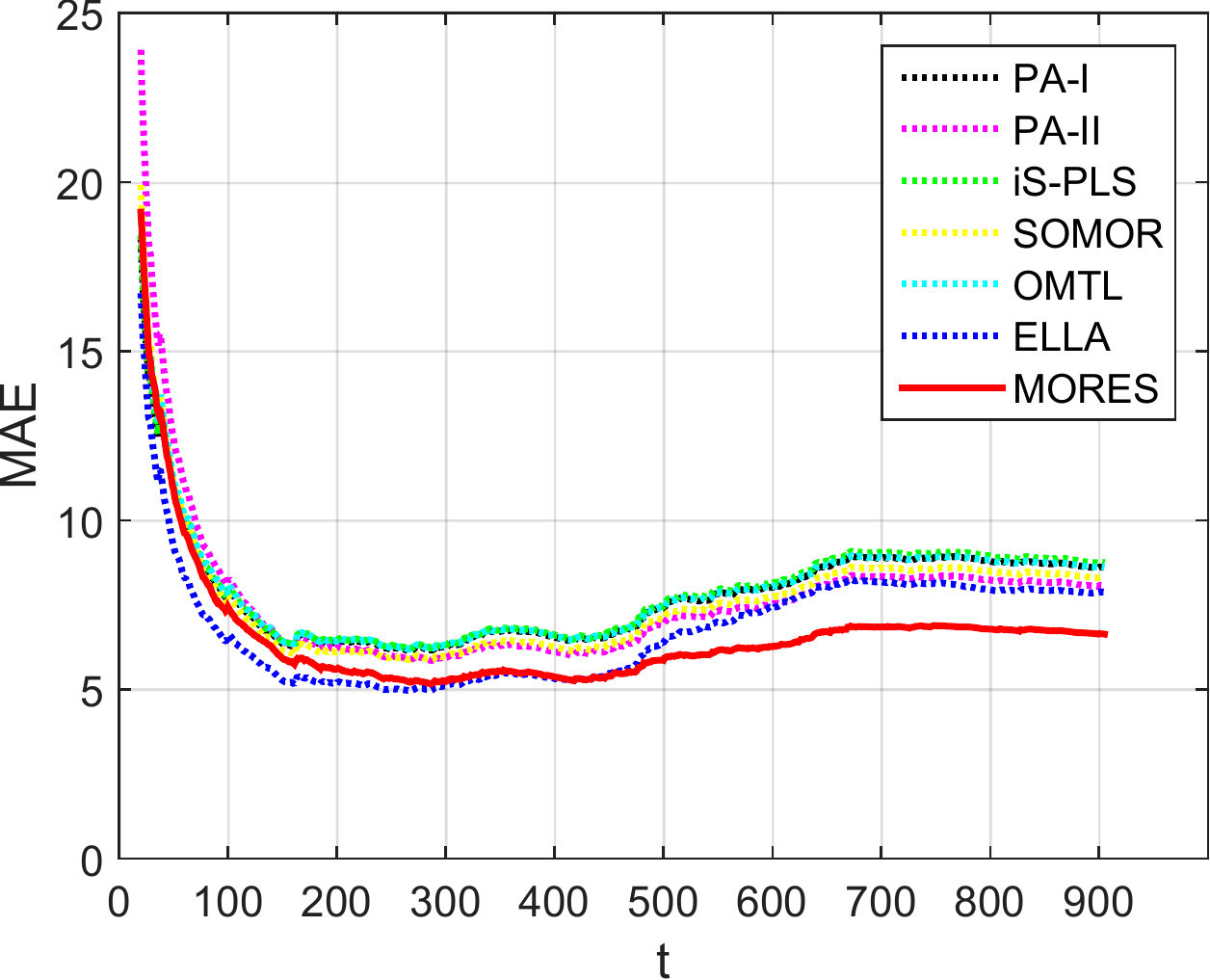}}
\subfigure[Oracle]{\includegraphics[width=0.325\linewidth]{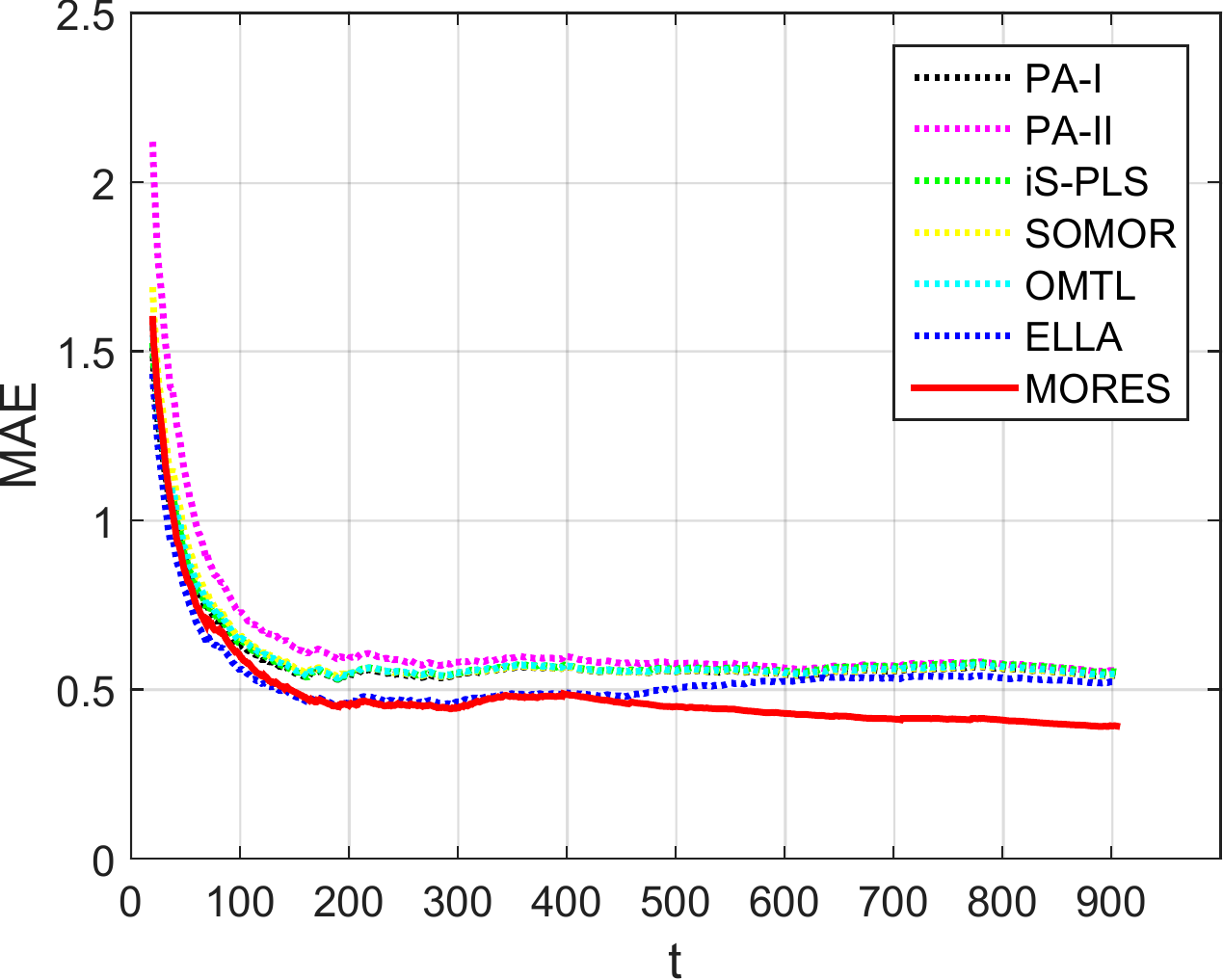}}
\subfigure[Average]{\includegraphics[width=0.325\linewidth]{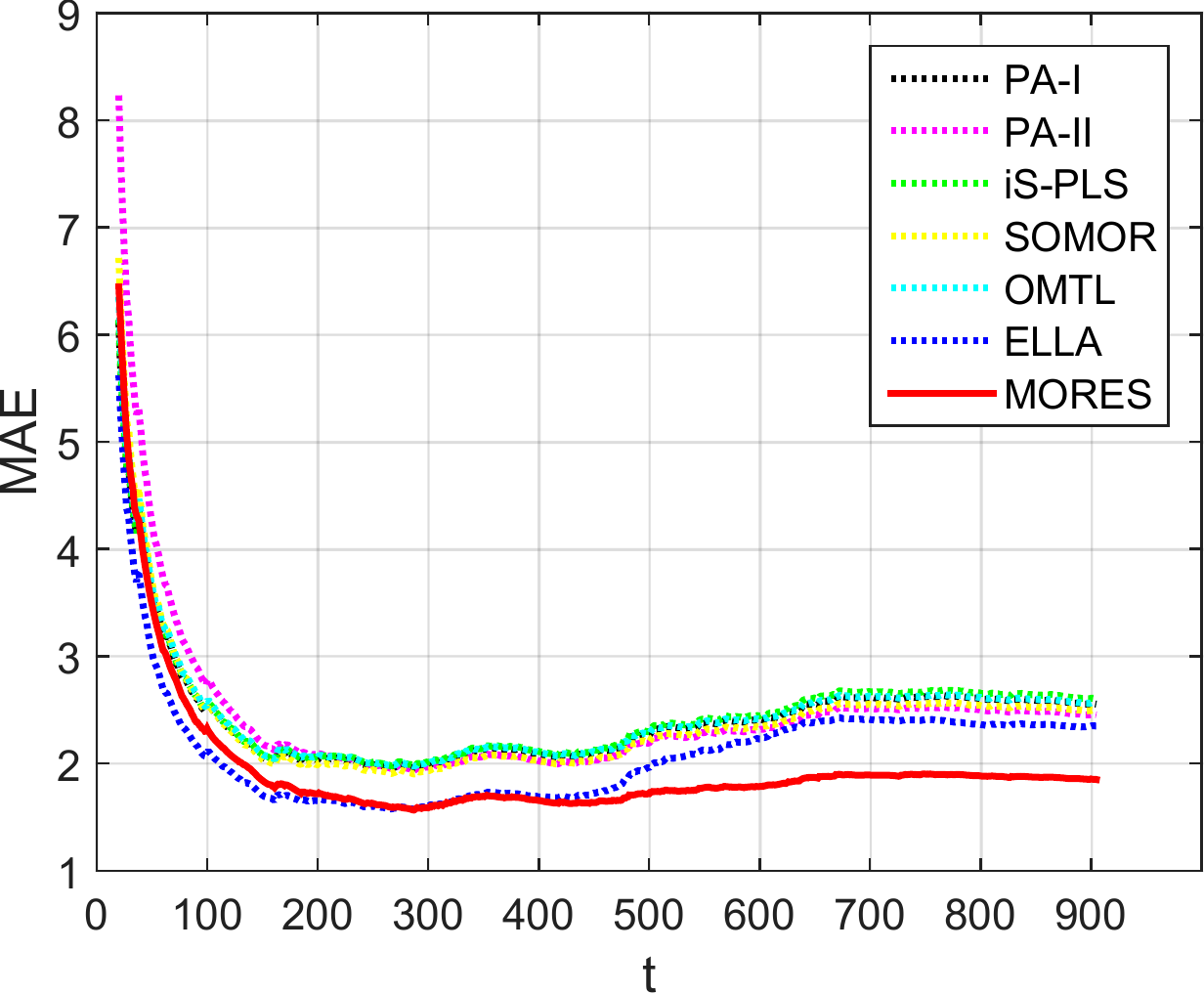}}
\caption{MAEs of different approaches as a function of the number of samples $t$ on different companies.}
\label{stock:iteration}
\end{figure*}

In order to demonstrate that our proposed algorithm can make the learn $\mathbf{P}_t$ eventually converge to the real model $\mathbf{P}_{real}=[\mathbf{p}_1,\mathbf{p}_2,\mathbf{p}_1+\mathbf{p}_2]$, we use the Frobenius norm $\|\cdot\|_F$ to measure the distance between $\mathbf{P}_t$ and $\mathbf{P}_{real}$ at each round $t$. The result is shown in Fig. \ref{coverg}. When $t=100$, the distance between $\mathbf{P}_t$ and $\mathbf{P}_{real}$ is close to zero, i.e., $\mathbf{P}_t$ converges to $\mathbf{P}_{real}$ at that step.

In addition, we also show that our algorithm can dynamically learn the relationship both the coefficients changes and the residual errors as samples continuously increase. Table \ref{correlation} lists the results. Table \ref{correlation} (a) and (b) are the learned correction matrix of the coefficients change and the correction matrix of the residual errors respectively, when $t=50$ samples arrive. Based on Fig. \ref{coverg}, we know that $\mathbf{P}_t$ has not yet converged to $\mathbf{P}_{real}$ at $t=50$, i.e., there should be correlations among the changing regression coefficients. Meanwhile, there should be also correlations among the residual errors. From Table \ref{correlation} (a) and (b), the relationships can be learned by our algorithm. When $t=500$, $\mathbf{P}_t$ converges to $\mathbf{P}_{real}$. Thus, there are no correlation among the coefficients when $\mathbf{P}_t$ is updated, while there are still correlations existing in the residual errors. Table \ref{correlation} (c) and (d) demonstrate that our algorithm can learn such relationships.

\subsection{Stock Price Prediction}

Following previous studies in \cite{lozano2013robust} and \cite{sohn2012joint}, we apply our algorithms to the stock data of companies for price prediction.
We choose the daily stock price data of five companies including IBM, Yahoo, Microsoft, Apple, and Oracle in the period from 2010 to 2013. The learned model can predict the stock prices in the future by using the stock prices in the past as inputs. Like \cite{lozano2013robust} and \cite{sohn2012joint}, we use the autoregressive model ${\mathbf{y}}_{t+1}=\mathbf{P}_{t}\mathbf{y}_{t}+\epsilon_t$, where ${\mathbf{y}}_{t+1}$ represents the real stock prices of the five companies at time $t+1$, and $\mathbf{P}_{t}$ denotes the learned regression coefficient matrix at time $t$.

The experimental results are reported in Table \ref{stock_price}. MORES achieves the best performance compared to the other methods. MORES gains 36.0$\%$, 21.8$\%$, 28.7$\%$, 16.0$\%$, and 24.9$\%$ relative accuracy improvement over ELLA, the second best approach, for IBM, Yahoo, Microsoft, Apple, and Oracle, respectively. Meanwhile, MORES obtains 21.4$\%$ relative improvement in terms of the average MAE over ELLA. These results show that dynamically learning the structures of both the regression coefficient matrix and the residual error vector, as well as utilizing the historical data in an appropriate way, is good for online multiple-output regression.
\begin{table}
\centering
\caption{MAEs of different methods on the stock price dataset. The last column is the average MAE. Best results are highlighted in bold fonts.}
\begin{tabular}{|c|c|c|c|c|c|c|c|}
\hline
{Method}      & IBM        &  Yahoo    &  Microsoft     &  Apple    & Oracle & Average   \\
\hline
{PA-I}  &    2.773  &  0.369 &   0.487  &  8.601 &   0.541 &   2.554 \\
\hline
{PA-II} &      2.771  &  0.381  &  0.507   & 8.057   & 0.556  &  2.454 \\
\hline
{iS-PLS} &      2.850  &  0.374 &   0.503 &   8.744 &   0.552   & 2.605\\
\hline
SOMOR &        2.755  &  0.368&    0.492  &  8.306  &  0.541  &  2.492\\
\hline
OMTL &    2.790  &  0.370  &  0.494 &   8.616   & 0.544 &   2.563 \\
\hline
ELLA &    2.533  &  0.354  &  0.464  &  7.876  &  0.522   & 2.350 \\
\hline
MORES &     \textbf{1.620}  &  \textbf{0.277} &   \textbf{0.331} &   \textbf{6.613} &   \textbf{0.392}   & \textbf{1.847}\\
\hline
\end{tabular}
\label{stock_price}
\end{table}

We also investigate the performances of different methods as a function of the number of samples ($t$). At the end of each online round, we calculate the MAE for each output attained so far. Fig.~\ref{stock:iteration} shows the results. The performance of MORES is superior to those of the other methods, especially when $t$ is larger. In addition, the MAE curves of Fig. \ref{stock:iteration} (a), (b), and (d) rise after falling as $t$ increases. This is because the stock price is severely evolving at the inflection point, such that the current model can not predict the future price well. Although the data are evolving, our algorithm is still better than the other methods under this circumstance. From Fig. \ref{stock:iteration} (a), we can see MORES can quickly adjust the model to fit in the data's evolvement.

We further verify the effectiveness of dynamically learning the structures of the coefficients change and the residual error vector, respectively. The experimental setting is as follows: we first set $\Omega$ to the identity matrix and update $\Gamma$ on each round, which indicates that only the \underline{R}elationships of the \underline{R}esidual \underline{E}rrors are learned in the process of model's update, called RRE. Second, we update $\Omega$ on each round and set $\Gamma$ to the identity matrix. This means that we only learn the \underline{R}elationships of the \underline{C}oefficients \underline{C}hange. We name it RCC for short. Last, we set both $\Omega$ and $\Gamma$ to the identity matrix, showing \underline{W}ithout \underline{R}elationship \underline{L}earning. We call it WRL.
Table \ref{component} lists the average MAEs of all the outputs with round $t$ increasing. Both RRE and RCC are better than WRL, which shows that dynamically learning the structures of the regression coefficients change and the residual errors are both beneficial to online regression. MORES achieves the best performance, showing that the combination of RRE and RCC is effective for predicting.
\begin{table}
\centering
\caption{The average MAE vs. round $t$. Best results are highlighted in bold fonts.}
\begin{tabular}{|c|c|c|c|c|c|c|}
\hline
{Method}      & $t\!=\!100$        &  $t\!=\!200$    &  $t\!=\!300$     &  $t\!=\!400$    & $t\!=\!500$    \\
\hline
WRL &    2.370 &  1.742 &   1.602   & 1.667  &  1.727  \\
\hline
RRE &    2.331 &  1.724 &   1.594   & 1.652  &  1.722  \\
\hline
RCC &    2.321  &  1.717  &  1.586  &  1.654  &  1.717   \\
\hline
MORES &     \textbf{2.310}  &  \textbf{1.701} &   \textbf{1.575} &   \textbf{1.641} &   \textbf{1.708}   \\
\hline
\end{tabular}
\label{component}
\end{table}
\begin{table*}
\centering
\caption{MAEs of different methods on the Barrett WAM dataset. The last column is the average MAE. Best results are highlighted in bold fonts.}
\begin{tabular}{|c|c|c|c|c|c|c|c|c|c|}
\hline
Method & 1st DOF       &  2nd DOF   &  3rd   DOF  &  4th  DOF&  5th DOF &  6th DOF &  7th DOF & Average   \\
\hline
PA-I  &      1.156  &  0.645  &  0.342 &   0.282   & 0.098  &  0.081  &  0.043   & 0.378  \\
\hline
PA-II &      1.192  &  0.729  &  0.332  &  0.294  &  0.094   & 0.079  &  0.042   & 0.394 \\
\hline
iS-PLS &     \textbf{0.421}   & 0.727  &  0.247  &  0.163   & \textbf{0.038}   & {0.047}   & \textbf{0.020}    &0.238 \\
\hline
SOMOR &       1.147 &    0.640 &    0.332  &   0.278 &    0.094  &   0.079  &   0.042 &    0.373 \\
\hline
OMTL  &     1.151  &   0.642  &   0.326   &  0.285  &   0.091  &   0.079  &   0.041  &   0.374\\
\hline
ELLA  &     0.470  &  0.774  &  0.282   & 0.277  &  0.087  &  0.085   & 0.044  &  0.288\\
\hline
MORES &   0.511  &  \textbf{0.294}  &  \textbf{0.160}  &  \textbf{0.141}  &  0.040  &  \textbf{0.042}  &  0.023  &  \textbf{0.173}\\
\hline
\end{tabular}
\label{robot}
\end{table*}
\subsection{Robot Inverse Dynamics}

We also study the problem of online learning the inverse dynamics of a 7 degrees of freedom of robotic arms on the Barrett WAM dataset.
This dataset consists of a total of 16,200 samples, where each sample is represented by 21 input features, corresponding to seven joint positions, seven joint velocities and seven joint accelerations. Seven joint torques for the seven degrees of freedom (DOF) are used as the outputs.

We summarize the results of different methods in Table \ref{robot}. For each output, MORES attains better prediction performance than the other methods except iS-PLS. iS-PLS has the best prediction performance on certain outputs, but our method beats it on most of the outputs. In addition, MORES significantly outperforms the two online multi-task learning algorithms, ELLA and OMTL, by learning the structures of the outputs and fully utilizing the historical data. MORES obtains 39.9\% and 52.9\% relative error deduction over ELLA and OMTL in the average MAEs, respectively.

\begin{figure}
\centering
\includegraphics[width=0.8\linewidth]{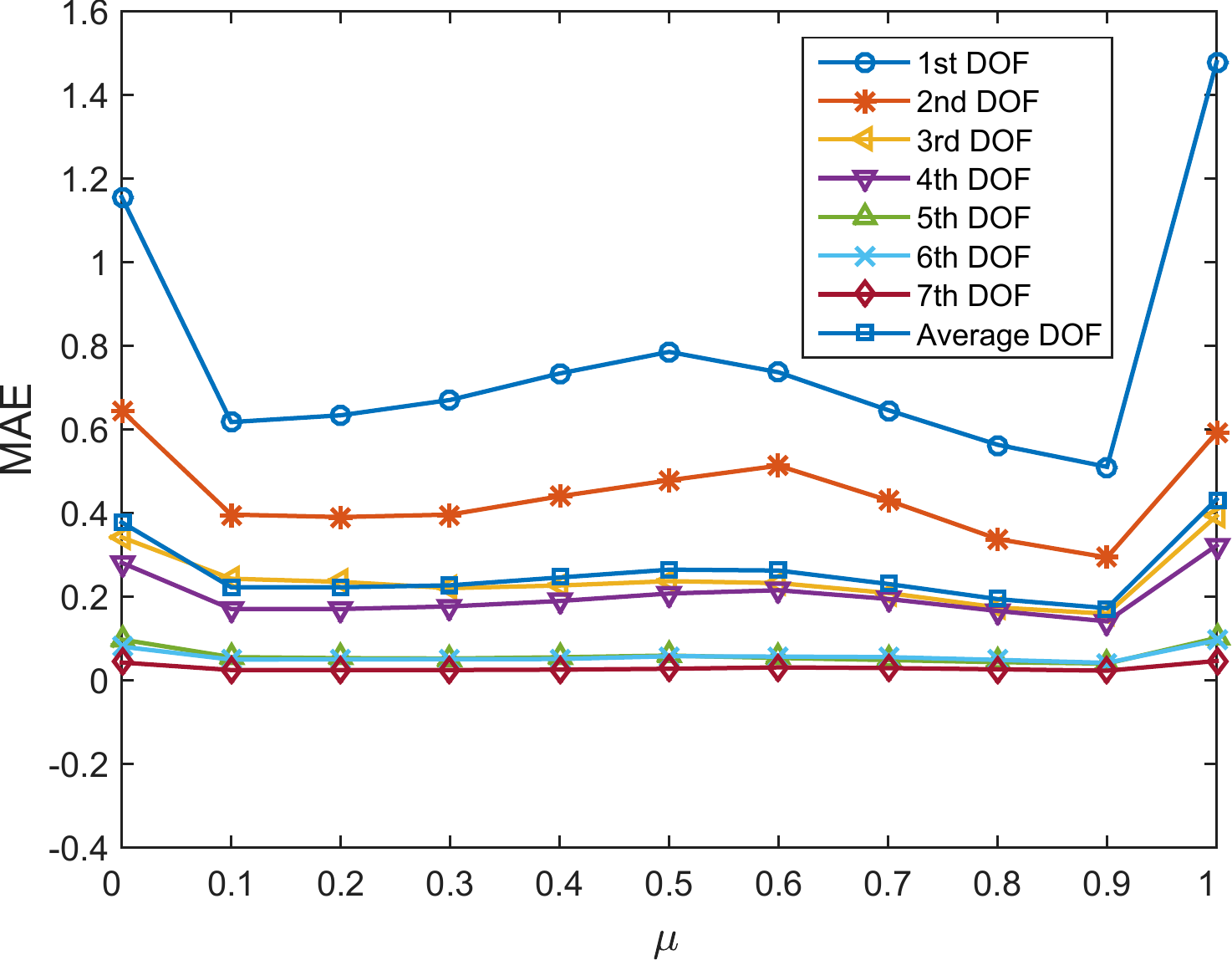}
\caption{MAEs of our method with different forgetting factors $\mu$.}
\label{factor}
\end{figure}

We verify the effectiveness of the forgetting factor $\mu$ in the experiment. We conduct the experiments with different values of $\mu$. Fig. \ref{factor} shows the results. Based on previous analysis in Sect. 2, we know no historical data are utilized to update the model on each round if $\mu=0$.
When $\mu = 1$, all the samples have the same weight for calculating the prediction loss.
As can be seen in Fig. \ref{factor}, when $0.1 \leq \mu \leq 0.9$ , the performance is improved compared to that of $\mu=0$.
This shows that taking advantage of the historical samples is good for online regression.
Moreover, when $0.1 \leq \mu \leq 0.9$ , the performance is improved compared to that of $\mu=1$. It implies that the data in this dataset are indeed evolving. By introducing $\mu$ to set higher weights on the newer training samples, the model can adapt to the data stream's evolvement, and the prediction accuracy can be improved.
In addition, when $\mu = 0.9$, MORES achieves the best performance.

\subsection{Weather Forecast}
We also evaluate our algorithms on the weather dataset for weather forecast \cite{alvarez2008sparse}. This dataset consists of wind speed, wind direction, barometric pressure, water depth, maximum gust, maximum wave height, air temperature, water temperature and average wave height, which is collected every five minutes by a sensor network located on the south coast of England. One and a half years' data containing 143,034 samples are used in the experiments. The first five variables are used as the predictors, and the rest are the response variables.

The experimental results are reported in Table \ref{weather}. MORES obtains better prediction performance than the other methods for almost all of the response variables. Meanwhile, the results of our algorithm are superior to those of the other methods in terms of the average MAE. Specifically, MORES achieves 20.4\%, 19.5\%, 57.1\%, 19.8\%, 20.4\%, 7.7\% relative accuracy improvement in terms of the average MAE over PA-I, PA-II, iS-PLS, SOMOR, OMTL, and ELLA, respectively.
\begin{table}
  \centering
\caption{MAEs of different methods on the weather dataset. `MWH', `AT', `WT', and `AWH' denote maximum wave height, air temperature, water temperature, and average wave height, respectively. The last column is the average MAE. Best results are highlighted in bold fonts.}
  \begin{tabular}{|c|c|c|c|c|c|c|}
\hline
{Method}      & MWH        &  AT    &  WT     &  AWH    & Average   \\
\hline
{PA-I}  &      0.770  &  0.217  &  0.144 &   0.007  &  0.285  \\
\hline
{PA-II} &       0.766&    0.216 &   0.141  &  0.006  &  0.282 \\
\hline
{iS-PLS} &       0.780 &   0.633 &   0.697  &  0.007 &   0.529 \\
\hline
{SOMOR} &      0.772 &   0.215  &  0.141  &  0.006 &   0.283\\
\hline
OMTL &    0.777   & 0.215  &  0.140  &  0.006  &  0.285 \\
\hline
ELLA &  0.802 &  \textbf{0.116}  & 0.059 &   0.007  &  0.246 \\
\hline
{MORES} &   \textbf{0.695} &   {0.172}  &  \textbf{0.037}   & \textbf{0.004 }  & \textbf{0.227}\\
\hline
\end{tabular}
\label{weather}
\end{table}
\begin{table}[htb]
\caption{MAEs of different methods on the weather dataset with various values $N$. Best results are highlighted in bold fonts.}
\label{performance}
\centering
\begin{tabular}{|c|c|c|c|c|c|c|c|}
\hline
\multirow{2}* {Method} & \multicolumn{6}{|c|}{N} \\
\cline{2-7}
 & 1      &  2    &  4     &  6    & 8  & 10   \\
\hline
{PA-I}  &     0.285  &  0.318  &  0.364   & 0.400   & 0.430 & 0.461  \\
\hline
{PA-II} &    0.282 &   0.315   & 0.361  &  0.397  &  0.426  &   0.454 \\
\hline
{iS-PLS} &     0.529 &   0.620  &  0.740 &   0.828  &  0.901 & 0.959\\
\hline
{SOMOR} &      0.283 &    0.316  &   0.362 & 0.398  &  0.426  &   0.454\\
\hline
OTML    &     0.284    & 0.317 &  0.364     &   0.400     &  0.428      &  0.455       \\
\hline
ELLA &    {0.246}  & {0.306} &   {0.379 } &  {0.432}  &  {0.472}  & {0.510} \\
\hline
{MORES} &    \textbf{0.227}  & \textbf{0.239} &   \textbf{0.255 }  & \textbf{0.270}  &  \textbf{0.283}   &\textbf{0.295}\\
\hline
\end{tabular}
\label{freq}
\end{table}

We further test all the methods with different model update frequencies. The experimental setting is as follows: The model is updated when accumulatively receiving N (=1, \ldots, 10) training data points, while the test is still performed on all the data points. As reported in Table \ref{freq}, the prediction accuracies of all the methods are gradually reduced  as $N$ increases. Moreover, for various values of $N$, MORES performs better than the other methods because of dynamic structure learning and the utilization of the historical data. In addition, we can see that the performance of MORES with $N=10$ is comparable to that of ELLA with $N=2$.

\subsection{Sensitivity Analysis}
We also study the sensitivity of parameters in our algorithm on the largest dataset, the weather dataset.
As shown in Fig. \ref{parameter}, with the fixed $\mu$, our method is not sensitive to the parameters with wide ranges, especially for $\beta$ and $\eta$. Therefore, we fix $\beta=1$ and $\eta=100$ throughout the experiments, so as to reduce the cost of tuning parameters.

\begin{figure}
\centering
\subfigure[Fix others, and vary $\alpha$ and $\mu$]{\includegraphics[width=0.49\linewidth]{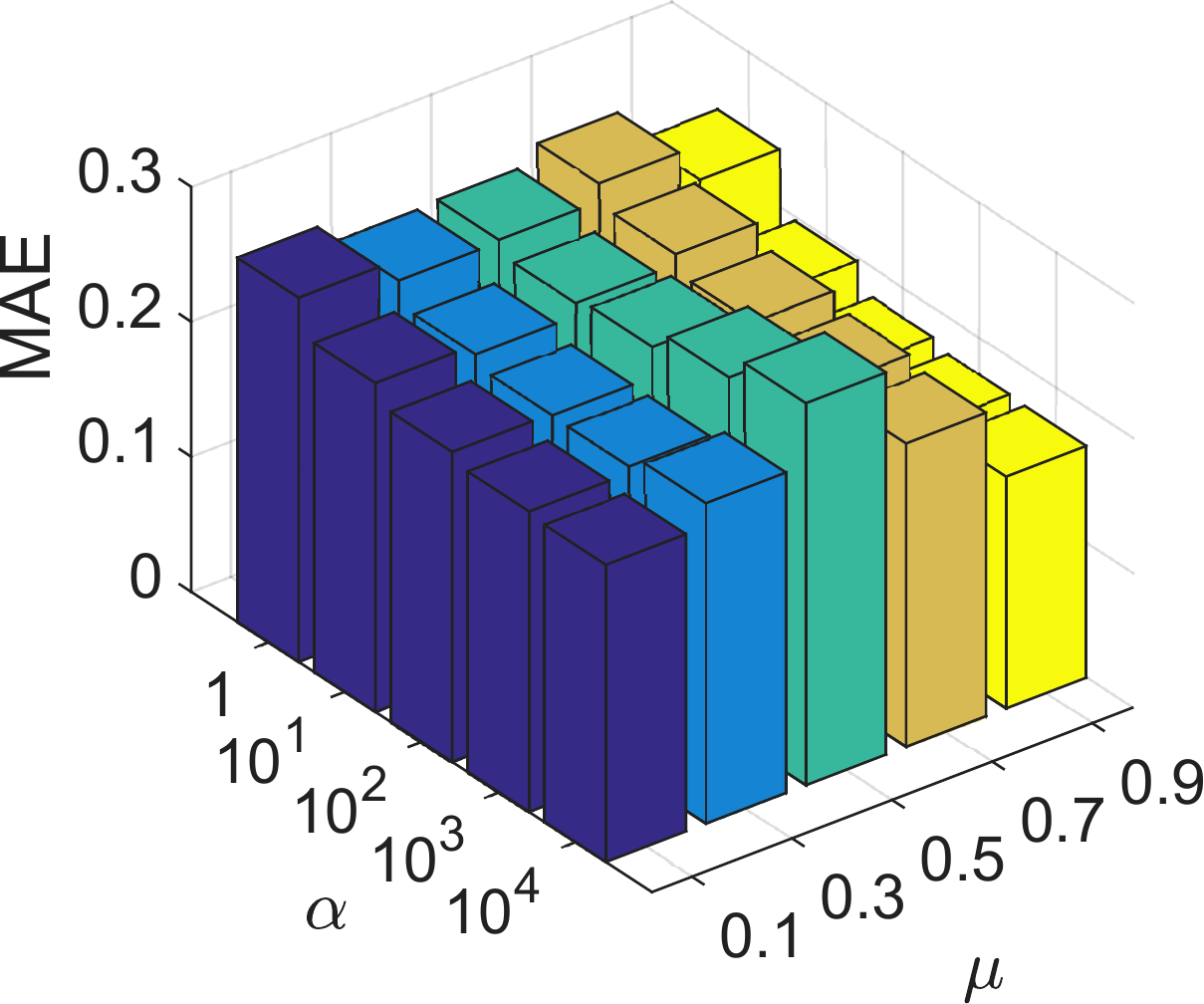}}
\subfigure[Fix others, and vary $\beta$ and $\mu$]{\includegraphics[width=0.49\linewidth]{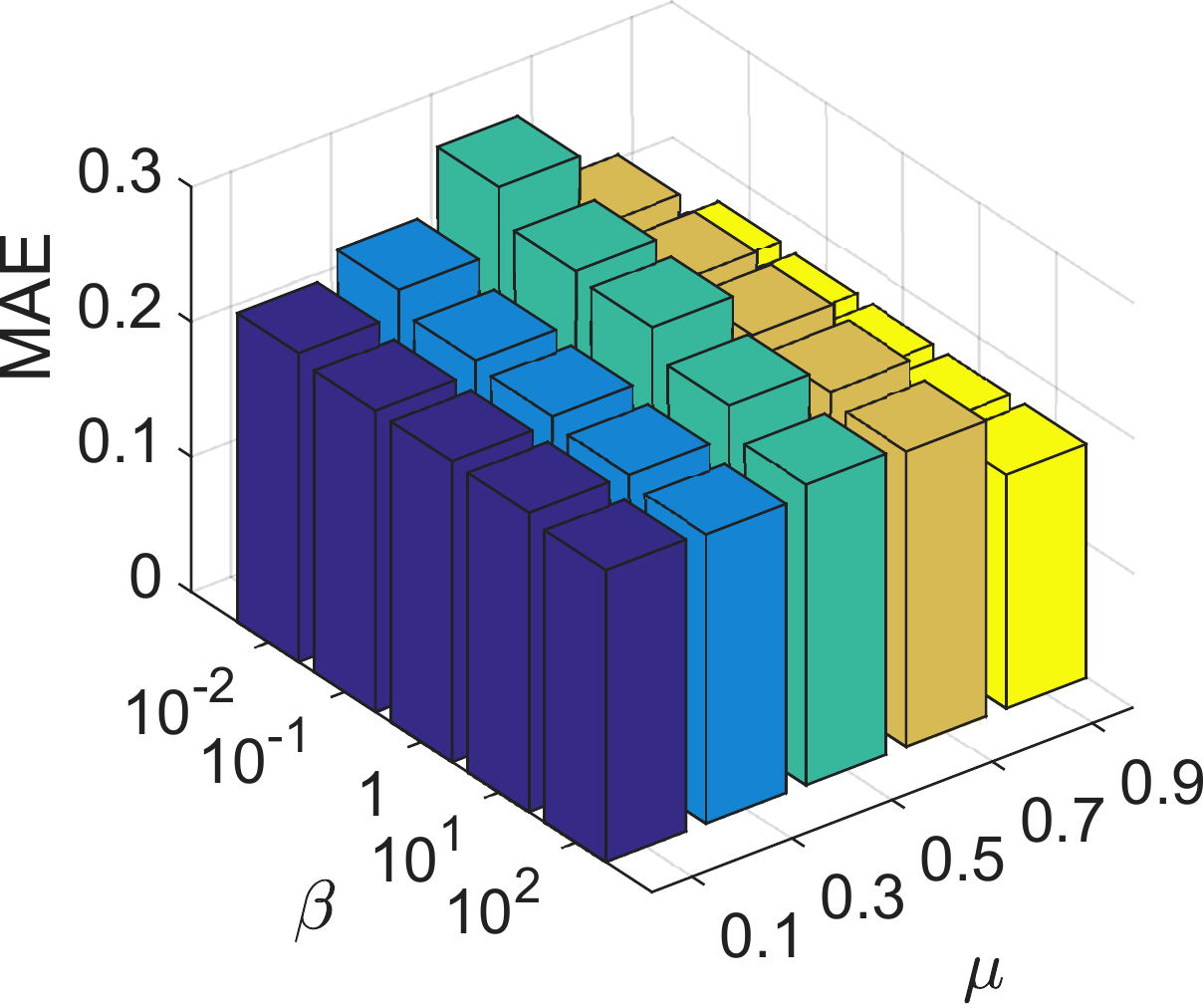}}
\subfigure[Fix others, and vary $\rho$ and $\mu$]{\includegraphics[width=0.49\linewidth]{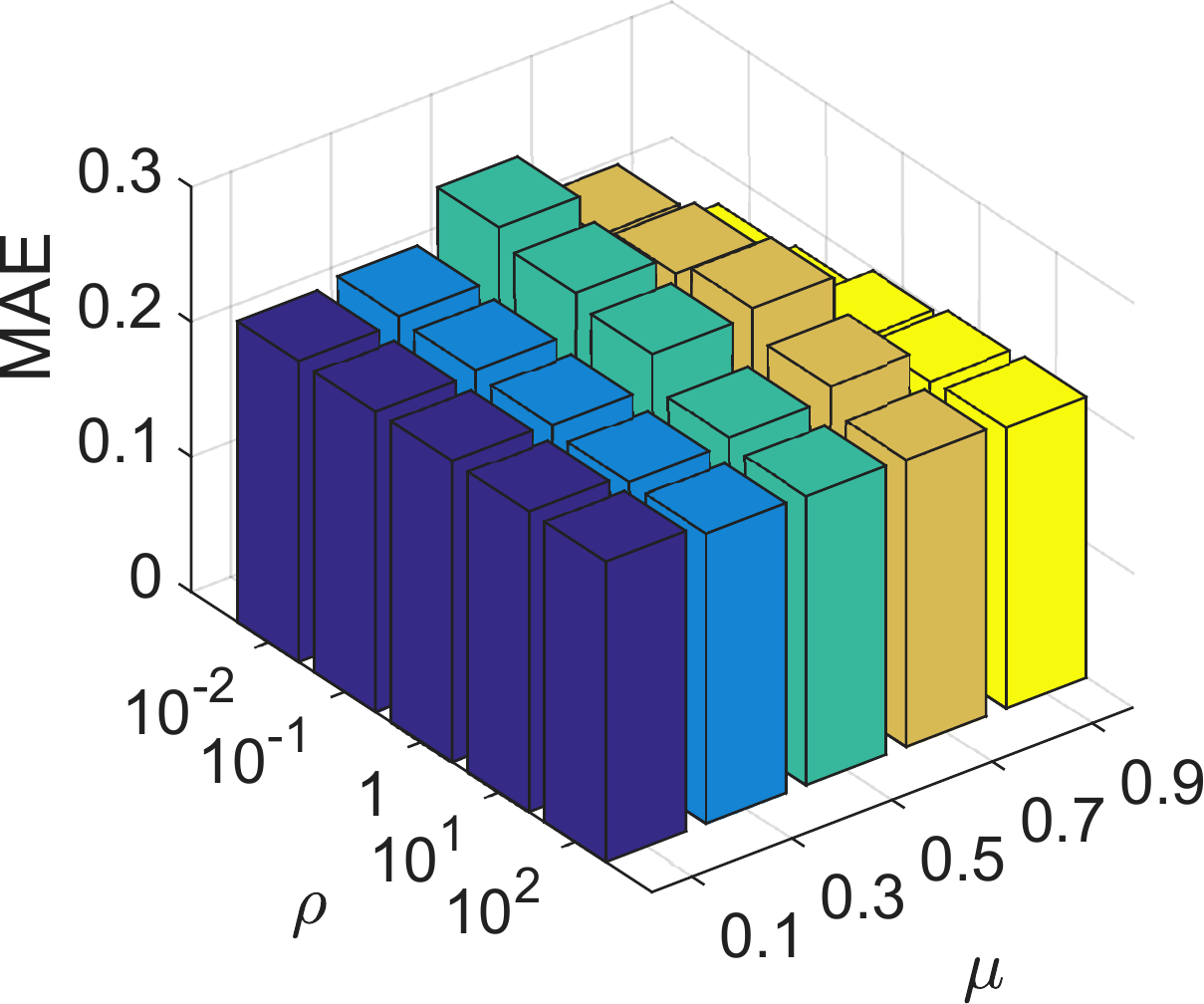}}
\subfigure[Fix others, and vary $\eta$ and $\mu$]{\includegraphics[width=0.49\linewidth]{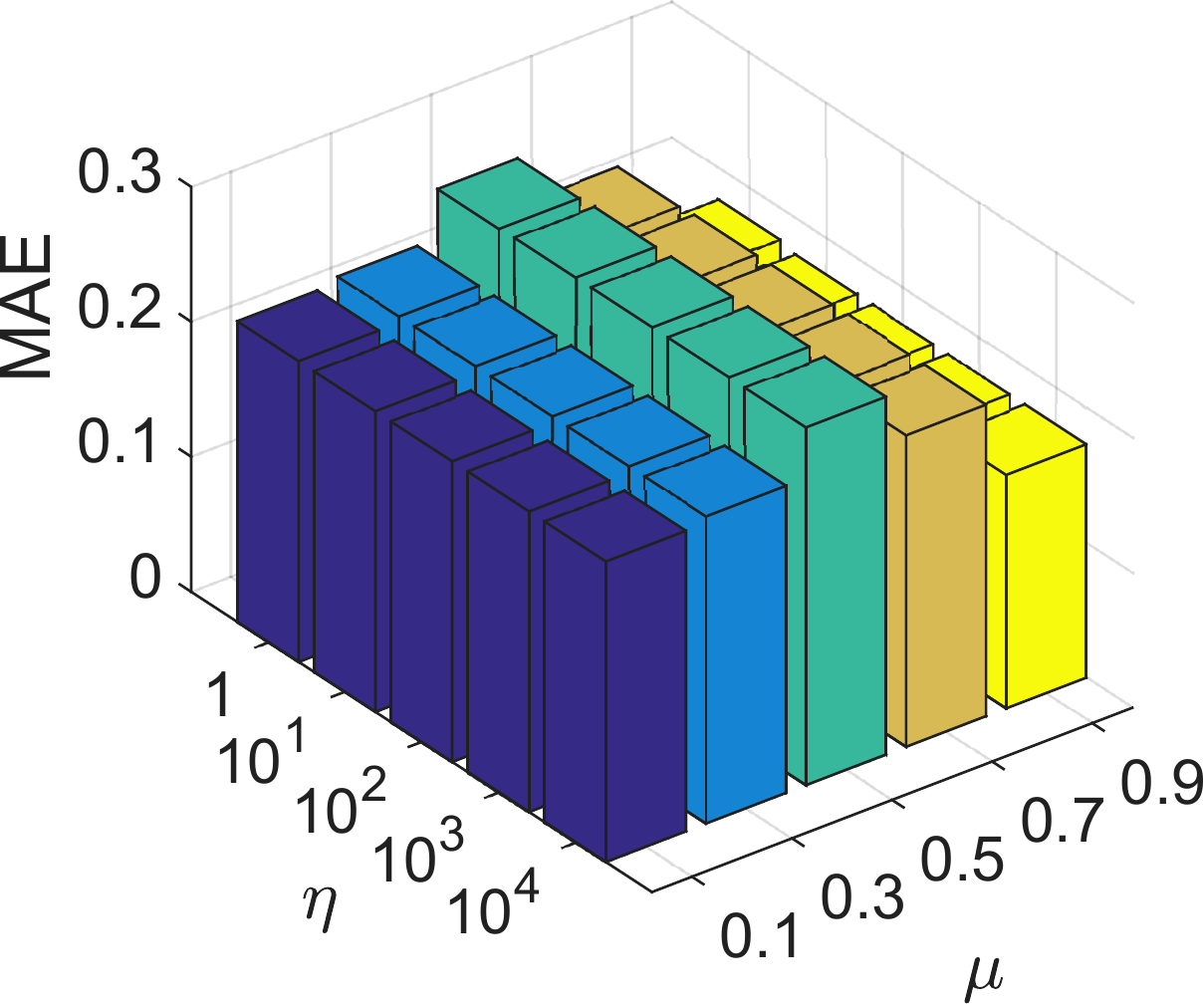}}
\caption{The study of parameter sensitivity on the weather dataset.}
\label{parameter}
\end{figure}

\subsection{Efficiency}
We test the update speeds of our algorithm on the three real-world datasets above. The experiments are conducted on a desktop with Inter(R) Core(TM) i7-4780 CPU, and MORES are implemented using MATLAB R2013b 64bit edition without parallel operation. We compare our method with ELLA, because of its good prediction accuracy based on the experiments above. Fig. \ref{speed} shows the update speeds of the two approaches. ELLA achieves 252, 136, 314 updates per second on the stock, the Barrett WAM, and the weather dataset, respectively, while MORES performs 4352, 2204, 5015 updates per second on the three datasets, repspectively. Based on these figures, MORES is more than 15 times faster than ELLA. In addition, if we apply some parallel implementations or use more efficient programming language, the update speed of MORES can be further improved.
\begin{figure}
\centering
\includegraphics[width=0.85\linewidth]{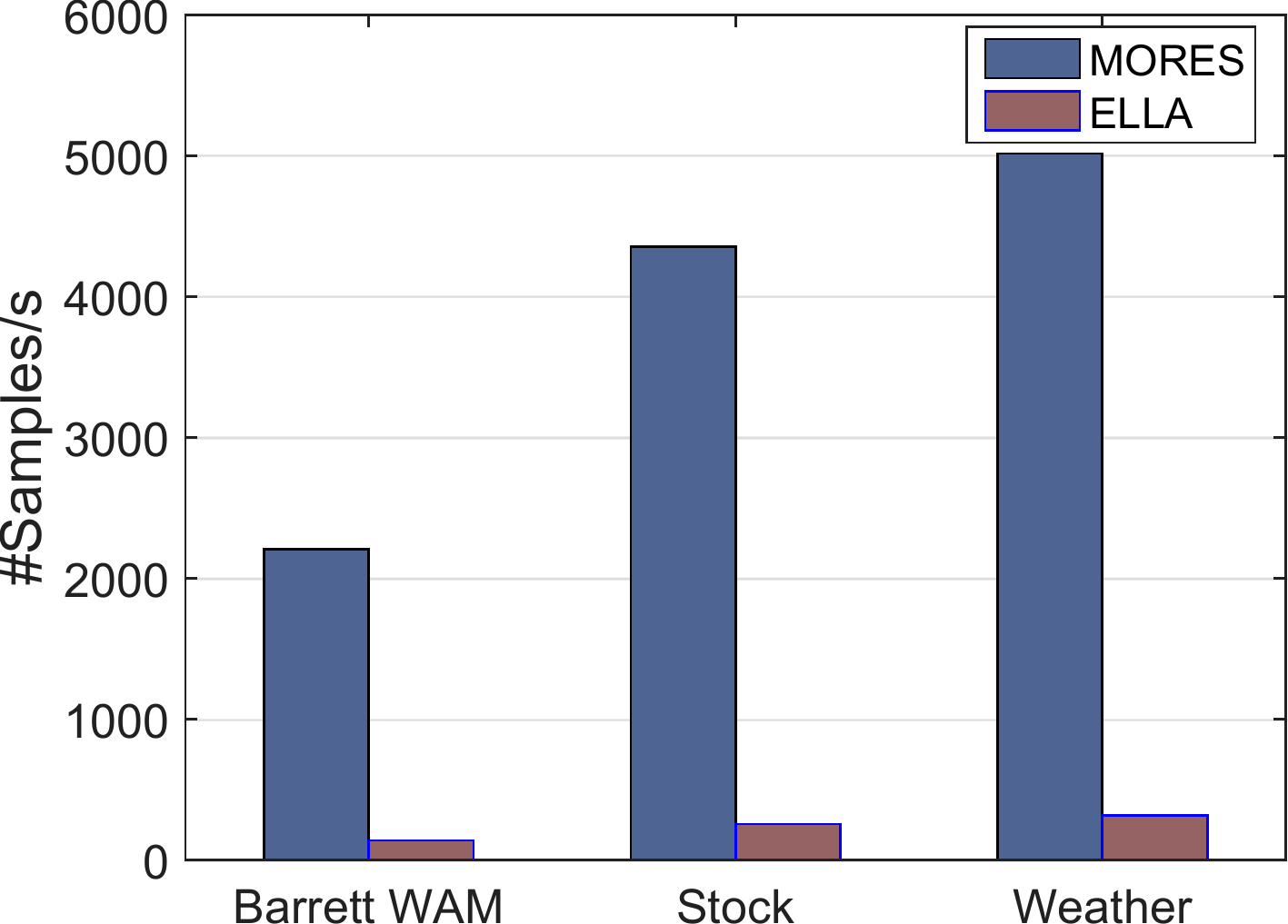}
\caption{Update Speeds of MORES (Unit: $\#$ samples processed per second).}
\label{speed}
\end{figure}

\section{Related Work}

In this section, we review the related works from three aspects: online single-output regression, online multi-task learning and batch multiple-output regression.

\textbf{{Online single-output regression}}: \cite{ma2003accurate} presented an online version of support vector regression algorithm, called (AOSVR).  AOSVR classified all training samples into three distinct auxiliary sets according to the KKT conditions that define the optimal solution. After that, the rules for recursively updating the optimal solution were devised based on this classification. \cite{crammer2006online} proposed a margin based online regression algorithm, called passive-aggressive (PA). PA incrementally updated the model by formalizing the trade-off between the amount of progress made on each round and the amount of information retained from previous rounds. \cite{montana2008learning} proposed an incremental support vector regression algorithm, which evolved a pool of online SVR experts and learned to trade by dynamically weighting the experts' opinions.

\textbf{{Online multi-task learning}}:
Dekel et al. \cite{dekel2006online} aim to online capture the relationships between the tasks, and leverage the relationships to improve the prediction accuracy. They measure each individual prediction with its own individual loss, and then take advantage of a global loss function to evaluate the quality of the multiple predictions made on each step.
 Cavallanti et al. \cite{cavallanti2010linear} proposed an online learning algorithm for multi-task classification. When one training instance of certain task became available, the authors updated the weight vectors for all the tasks simultaneously depending on the rules derived from a pre-defined interaction matrix. The fixed interaction matrix encodes the task relatedness, which is beneficial to multitask classification. Instead of treating the interaction matrix as a priori knowledge, Saha et al. \cite{saha2011online} developed an online learning framework which simultaneously learned the weight vectors for all the classification tasks as well as the interaction matrix adaptively from the data. Moreover, the authors exploited the learned interaction matrix to conduct an active learning extension in an online multi-task learning setting.
 Ruvolo and Eaton \cite{eaton2013ella} propose a method, called ELLA, which aims to maintain a basis to transfers knowledge for learning each new task, and dynamically updates the basis to improve previously learned models.
In order to lower the complexity over \cite{eaton2013ella}, Ruvolo and Eaton \cite{ruvolo2014online} further take advantage of K-SVD to online learn multiple tasks.

\textbf{Batch multiple-output regression}: Many batch multiple-out-put regression algorithms have been proposed, which tried to mine the structure among outputs. Rothman et al. \cite{rothman2010sparse} presented MRCE, which jointly learned the output structure in the form of the noise covariance matrix and the regression coefficients for predicting each output. Sohn and Kim \cite{sohn2012joint} designed an algorithm to simultaneously estimate the regression coefficient vector for each output along with the covariance structure of the outputs with a shared sparsity assumption on the regression coefficient vectors. Rai et al. \cite{rai2012simultaneously} proposed an approach that leveraged the covariance structure of the regression coefficient matrix and the conditional covariance structure of the outputs for learning the model parameters. \cite{kim2012tree} proposed a tree-guided group lasso, or tree lasso, that directly combined statistical strength across multiple related outputs. They estimated the structured sparsity under multi-output regression by employing a novel penalty function constructed from the tree. Since these methods are trained in the batch mode, they are not suitable for online multiple-output prediction.

\section{Conclusions}
In this paper, we proposed a novel {online} multiple-output regression method for streaming data. The proposed method can simultaneously and dynamically learn the structures of both the regression coefficients change and the residual errors, and leverage the learned structure information to continuously update the model. Meanwhile, we accumulated the prediction error on all the seen samples in an incremental way without information loss, and introduced a forgetting factor to weight the samples so as to fit in data streams' evolvement. The experiments were conducted on one synthetic dataset and three real-world datasets, and the experimental results demonstrated the effectiveness and efficiency of the proposed method.

\ifCLASSOPTIONcaptionsoff
  \newpage
\fi

\bibliographystyle{IEEEtran}
\bibliography{bare_jrnl_compsoc}

\end{document}